\definecolor{niceRed}{RGB}{190,38,38}
\definecolor{Red2}{RGB}{219, 50, 54}
\definecolor{mgreen}{HTML}{9ECA8D}
\definecolor{blueGrotto}{HTML}{059DC0}
\definecolor{limeGreen}{HTML}{81B622}
\definecolor{myellow}{rgb}{0.88,0.61,0.14}
\definecolor{navyBlueP}{HTML}{03468F}
\definecolor{Sepia}{HTML}{7F462C}
\definecolor{red2}{HTML}{1F462C}
\definecolor{orange2}{HTML}{FF8000}
\definecolor{mgray}{HTML}{ABB3B8}
\definecolor{myPurple}{RGB}{175,0,124}
\definecolor{royalBlue}{HTML}{057DCD}
\definecolor{mpink}{HTML}{FC6C85}
\title{Beyond Primal-Dual Methods in Bandits with 
Stochastic and Adversarial Constraints}
\author{
    Martino Bernasconi$^\dagger$ \quad
    Matteo Castiglioni$^\ddagger$ \quad
    Andrea Celli$^\dagger$ \quad
    Federico Fusco$^\ast$  \vspace{6mm}\\
    $^\dagger$\ Bocconi university\\
    $^\ddagger$\ Politecnico di Milano\\
    $^\ast$\ Sapienza University of Rome\vspace{2mm}\\
    {\textcolor{black}{\small\texttt{\{martino.bernasconi,andrea.celli2\}@unibocconi.it}},} \\ 
    {\textcolor{black}{{\small\texttt{matteo.castiglioni@polimi.it,}}}}
    {\textcolor{black}{\small\texttt{federico.fusco@uniroma1.it}}}
}
\date{}
\begin{document}
	
\maketitle

\begin{abstract}
We address a generalization of the bandit with knapsacks problem, where a learner aims to maximize rewards while satisfying an arbitrary set of long-term constraints. Our goal is to design best-of-both-worlds algorithms that perform optimally under both stochastic and adversarial constraints. Previous works address this problem via primal-dual methods, and require some stringent assumptions, namely the Slater's condition, and in adversarial settings, they either assume knowledge of a lower bound on the Slater's parameter, or impose strong requirements on the primal and dual regret minimizers such as requiring weak adaptivity. We propose an alternative and more natural approach based on optimistic estimations of the constraints. Surprisingly, we show that estimating the constraints with an UCB-like approach guarantees optimal performances.
Our algorithm consists of two main components: (i) a regret minimizer working on \textit{moving strategy sets} and (ii) an estimate of the feasible set as an optimistic weighted empirical mean of previous samples. The key challenge in this approach is designing adaptive weights that meet the different requirements for stochastic and adversarial constraints. Our algorithm is significantly simpler than previous approaches, and has a cleaner analysis. Moreover, ours is the first best-of-both-worlds algorithm providing bounds logarithmic in the number of constraints. Additionally, in stochastic settings, it provides $\widetilde O(\sqrt{T})$ regret \textit{without} Slater's condition.
\end{abstract}
\clearpage
\section{Introduction}
 We address the problem faced by a decision maker who aims to maximize its cumulative reward over a time horizon $T$, while satisfying an arbitrary set of $m$ long-term constraints. At each round $t$, the learner selects an action $a_t$ from a finite set of $K$ actions, and then observes a reward $f_t(a_t)$ and some costs $g_t(a_t)\in [-1,1]^m$. The goal is to design best-of-both-worlds algorithms for this problem that perform optimally under both stochastic and adversarial constraints. We always assume rewards are generated adversarially. This is because the real complexity of the problem is captured by the nature of the constraints, so that transitioning from adversarial to stochastic rewards under the same type of constraints does not affect our results.
 
The first works on bandits with constraints focus on budget constraints, a.k.a bandit with knapsack (\bwk) \citep{Badanidiyuru2018jacm} study the settings in which both rewards and constraints are i.i.d.~and propose an UCB-based approach, combined with primal-dual method. \citet{agrawal2014bandits} provide an UCB-like approach for more general rewards and costs.
\citet{immorlica2022jacm,kesselheim2020online} analyse settings with adversarial constraints and rewards, providing a primal-dual algorithm to tackle the problem.
\citet{castiglioni2022online} show that a similar primal-dual approach provides best-of-both-worlds guarantees.
Many subsequent works extend the setting to more general constraints, mostly employing primal-dual methods \cite{castiglioni2022unifying, castiglioni2023online,slivkins2023contextual,bernasconi2024noregret,bernasconi2023bandits,balseiro2022best,fikioris2023approximately}.
Primal-dual methods have been the only effective method that provides best-of-both-worlds guarantees for bandits with constraints \cite{castiglioni2022unifying,castiglioni2023online,bernasconi2024noregret,bernasconi2023bandits,balseiro2022best}. 
However, such methods require assumptions that are particularly stringent in settings beyond knapsack constraints.
First, they require the existence of a strictly feasible solution (\ie Slater's condition) to avoid a regret of order $O(T^{\nicefrac 34})$ \cite{castiglioni2022unifying,slivkins2023contextual}. While this assumption always holds in bandits with knapsack setting (where ``doing nothing'' incurs in a negative cost equal to the per-round budget), this assumption is far more stringent with general constraints. Moreover, some works require the knowledge of a lower bound on the Slater's parameter \cite{castiglioni2022unifying,slivkins2023contextual}. 
Subsequent works circumvent this assumption at the expense of strong requirements on the primal and dual regret minimizers~\cite{castiglioni2023online, bernasconi2024noregret,bernasconi2023bandits,aggarwal2024noregret}.
In particular, such approaches require weakly-adaptive primal and dual regret minimizers.
The challenge of applying such primal-dual algorithms to bandit beyond knapsack constraint is reflected in regret bounds that exhibit non-optimal dependencies on some parameters. For instance, a polynomial (instead of logarithmic) dependence on the number of constraints~\cite{castiglioni2023online, bernasconi2024noregret,bernasconi2023bandits}. For further pointers to the literature, we refer to \Cref{app:related}.

\subsection{Our contribution}

We propose an alternative and insightful approach to design best-of-both-worlds algorithms for bandit with long-term constraints.
Our method relies on optimistic estimations of the constraints through a weighted empirical mean of past samples. Surprisingly, we demonstrate that using a UCB-based approach to estimate the constraints ensures optimal performance under both stochastic and adversarial constraints.
Our algorithm differs significantly from previous UCB-based approaches. For instance, it guarantees no-regret even with adversarial rewards and stochastic constraints, unlike previous works~\citep{agrawal2014bandits, Badanidiyuru2018jacm,kumar2022non}. Moreover, it is the first UCB-like approach that provides an optimal competitive ratio of $1+\nicefrac{1}{\rho}$ with adversarial constraints, where $\rho$ is the unknown Slater's parameter.

Our algorithm consists of two simple components. The first is an adversarial regret minimizer working on \textit{moving strategy sets}. In particular, at each round, the regret minimizer chooses a strategy in the current optimistic estimation of the feasible set, and is required to achieve no-regret with respect to any strategy in the intersection of all feasibility set estimations.
The second component is a tool for estimating the feasible set using an optimistic weighted mean of previous samples. The key challenge in this approach is designing adaptive weights that meet the different requirements for stochastic and adversarial constraints.
Intuitively, in stochastic settings, we aim to converge to the (unweighted) empirical mean of the observed constraints. Conversely, in adversarial settings, we should assign larger weights to recent samples to address time-dependent constraints.

Not only is our algorithm significantly simpler than previous approaches, with a clean and insightful analysis, but it also provides better theoretical performance than primal-dual methods.
Indeed, it is the first best-of-both-worlds algorithm to provide bounds logarithmic in the number of constraints. Moreover, in stochastic settings, it is the first algorithm to provide $\widetilde O(\sqrt{T})$ regret \textit{without} requiring Slater's condition. Finally, it guarantees that the expected violation in the current round converges to zero, making our algorithm ``converge" to strategies that are feasible in expectation. This provides a more stable and consistent control on the violations. %\mat{frase giusta per incitrollare?}.  \ac{spiegare che vuol dire?}
%\mat{martino mi ha fatto notare che positive violation non ha senso, forse è più bello parlare di rate of convergence}

\section{Model and Preliminaries}

We address the problem faced by an agent aiming at maximizing its cumulative reward over a time horizon $T$, while satisfying  $\range{m}$ long-term constraints.\footnote{For any $N\in\mathbb{N}$, we use $\range{N}$ to denote the set $\{1,\ldots, N\}$.}
The agent has a set $\range{K}$ of available actions and, at each round $t\in\range{T}$, selects $a_t\in\range{K}$. The agent then observes the corresponding reward $f_t(a_t)\in[0,1]$ and a cost $g_t^{(i)}(a_t)\in[-1,1]$, for each constraint $i\in\range{m}$.
We define the cumulative violation of the $i^{th}$ constraint as
\[
\textstyle{
V_T^{(i)}\defeq\sum_{t\in\range{T}} g_t^{(i)}(a_t)},
\]
while $V_T\defeq\max_{i \in \range{m}} V_T^{(i)}$ is the maximum violation across all constraints.
At a high level, we want to minimize the regret while keeping the violation of each constraint $V_T^{(i)}$ sublinear in $T$.

The focus of this paper is on handling both stochastic and adversarial constraints. Conversely, we always assume the rewards 
% $\{f_t\}_{t\in\range{T}}$ 
to be generated up-front by an adversary; we do not treat explicitly the situation where the rewards are generated i.i.d.~because our guarantees are already tight for the harder case of adversarial rewards.\footnote{Indeed, when the constraints are stochastic, we obtain the state-of-the-art $\widetilde O(\sqrt{T})$ regret even with adversarial rewards.} In the stochastic setting, we assume that $g_t=\{g^{(i)}_t\}_{i \in \range{m}}$ is drawn i.i.d.~from a fixed but unknown distribution $\mathcal{G}$, and we let  $\bar g^{(i)}(a)=\mathbb{E}_{g \sim \mathcal{G}}[g^{(i)}(a)]$ be the expected cost of action $a$ for the $i^{th}$ constraint. On the other hand, in the adversarial setting $\{g_t\}_{t\in\range{T}}$ is an arbitrary sequence of cost functions.

Let $\Delta_K$ to be the set of discrete probability distributions over the set $\range{K}$. Then, at round $t\in \range{T}$, given a randomized strategy $x_t\in\Delta_K$, the expected learner reward is $\sum_{a\in\range{K}} f_t(a) x_t(a)=\langle x_t, f_t\rangle$. Similarly, $\langle x_t, g^{(i)}_t\rangle$ denotes the expected cost of the $i^{th}$ constraint. Finally, we use $n_t(a)$ to denote the number of times arm $a$ was played up to time $t$, i.e., $n_t(a)= \sum_{\tau=1}^t\ind{a_\tau=a}$.

We want to design algorithms which achieve good performances in both the adversarial and the stochastic setting.
%However, there is no good common measure of quality across both settings. In particular, 
As it is customary in the literature, we compare our learning algorithm with different benchmarks according to the setting.

\paragraph{Stochastic Benchmark}

In the stochastic setting, the constraints $g^{(i)}_t$ are i.i.d.~samples with mean $\bar g^{(i)}$ and thus we consider as benchmark the best fixed randomized strategy that satisfies the constraints in expectation, which is a standard choice in bandits with constraints \cite{bernasconi2024noregret,slivkins2023contextual,castiglioni2022unifying}. Formally, in the stochastic setting, we can define the feasible sets $\cX_i^\star$ and $\cX^\star$ as follows:
\[
\cX_i^\star\defeq \left\{x\in\Delta_K: \langle x, \bar g^{(i)}\rangle\le 0\right\}\quad\text{and}\quad\cX^\star\defeq\cap_{i\in\range{m}} \cX_i^\star.
\]
Then, we can define the stochastic baseline as:
\begin{align*}
\OPTS\defeq&\max_{x\in\cX^\star} \sum_{t\in\range{T}}\langle x, f_t\rangle.
\end{align*}
 
We naturally assume the existence of safe mixed strategies, \ie that $\cX^\star\neq\emptyset$. This is equivalent to assume the existence of a randomized strategy $x^\varnothing$ such that $\langle x^\varnothing,\bar g^{(i)}\rangle\le 0$ for all $i$. Notice that this is a weaker assumption than the one commonly assumed by best-of-both-worlds algorithms in which $\langle x^\varnothing,\bar g^{(i)}\rangle\le -\rho$, where $\rho$ is a strictly positive constant (see, e.g., \citep{castiglioni2022unifying,bernasconi2024noregret,balseiro2022best}).% \fe{This paragraph is a bit off: we say that we make an assumption that is common in the literature, but later we say that is weaker than what is usually done}

% \mat{pippa sul fatto che questo è un upperbound sulla best dynamic policy?}\fe{Secondo me va menzionato} \mat{non possiamo dirlo siccome i reward sono avversari. Secondo me siamo convincienti dicendo che tutti fanno cosi}

\paragraph{Adversarial Benchmark}

In the adversarial setting, $\{g_t\}_{t\in\range{T}}$ is an arbitrary sequence of constraints. We consider as benchmark the best unconstrained strategy:
\begin{align*}
\OPTA\defeq&\max_{x\in\Delta_K} \sum_{t\in\range{T}}\langle x, f_t\rangle.
\end{align*}
While this baseline has already been used \citep[e.g.,][]{bernasconi2024noregret,bernasconi2023bandits}), other works on adversarial bandit with constraints employ weaker baselines \citep[e.g.,][]{immorlica2022jacm,castiglioni2022unifying}. For instance, \citet{castiglioni2022unifying} consider the best fixed strategy which is feasible on average. However, we show that, despite using a stronger baseline, we obtain a competitive ratio that is optimal even for the weaker baselines commonly adopted in the literature~\cite{castiglioni2022unifying,balseiro2022best,balseiro2019learning}.
\subsection{Best-Of-Both-Worlds Guarantees}

Our goal is to design learning algorithms that exhibit optimal guarantees both in the stochastic and adversarial settings.
%Clearly, by reduction from standard lower bounds in stochastic and adversarial bandits \ma{cite}, we have that $V_T=\Omega(\sqrt{T})$, and thus we look for algorithms for which $V_T=O(\sqrt{T})$ with high probability. \mat{toglierei}
In the stochastic setting, we are interested in minimizing the \emph{regret} $R_T$ w.r.t.~$\OPTS$:
\[\textstyle{
R_T=\OPTS-\sum_{t\in\range{T}} f_t(a_t)},
\]
and specifically we require both $R_T$ and $V_T$ to be in $\widetilde O(\sqrt{T})$ with high probability.
This clearly matches the standard $\Omega(\sqrt{T})$ lower bound  that holds even without constraints \citep{auer2002nonstochastic}. 

%Moreover, several works, showed that in the stochastic setting it is possible to achieve:
%\[\textstyle{
%\max\left(V_T, R_T\right)=O(\sqrt{T}).}
%\]
In the (harder) adversarial setting, we pose the less ambitious goal of achieving a constant competitive ratio with respect to $\OPTA$, or equivalently sublinear $\alpha$-regret with constant $\alpha$. Formally, given an $\alpha<1$, we define the $\alpha$-regret as:
\[\textstyle{
\alpha\text{-}R_T=\alpha\cdot \OPTA - \sum_{t\in\range{T}} f_t(a_t).}
\]

As it is customary in the literature \citep{castiglioni2022online}, the competitive ratio $\alpha$ obtained by our algorithms depends on the following Slater's parameter $\rho$: 
\begin{equation}
\label{def:rho}
    \rho=-\inf_{a\in\range{K}}\max_{t\in\range{T}, i\in\range{m}}g_t^{(i)}(a).
\end{equation}

The parameter $\rho$ is related to the existence of strictly-feasible actions, and only depends on the constraints. Our definition is slightly stronger than the one in most previous works where the $\inf$ is over randomized strategies.
To guarantee the existence of a feasible strategy we assume that $\rho\ge 0$. Then, our goal is to guarantee that both $V_T$ and the $\alpha$-regret, with $\alpha = \nicefrac{\rho}{\rho+1}$, belong to $\widetilde O(\sqrt{T})$ with high probability. Note that this matches the lower bound of \citet{bernasconi2024noregret}.
% :
% \[\textstyle{
% \max\left(V_T, \left(\frac{\rho}{1+\rho}\right)\hspace{-0.2em}\text{-}R_T\right)=,}
% \]
%
%We use, $\alpha$-regret instead of competitive ratio, because it is more common in the learning community \cite{BT?}, however it is easy to see that sublinear bounds on the $\alpha$-regret would yield algorithm with have asymptotic $1/\alpha$ competitive ratio.

%rimosso in quanto abbiamo bound ottimi rispetto al numero di constraints
%\begin{remark}
%    Given that this is a theoretical proof of concept, we mainly focused on the dependency on $T$ in our results. We leave as a future work trying to optimize with respect to the other parameters such as the number of constraints $m$ and the number of actions $K$.
%\end{remark}
%\mat{remove!!!???}\ma{ahahah}
%
%\section{Our approach}
%\ma{Federico propone: unire 2 e 3 (togliendo subsec 3.1 sui moving sets)}
%\ma{contributions:}
%\begin{itemize}
%    \item \ma{Dire che risolve slater in stocastico}
%    \item \ma{Algoritmo semplicissimo}
%    \item \ma{First BoBw which is not primal dual}
%    \item \ma{Conceptual}
%    \item \ma{No knowledge of $\rho$}
%\end{itemize}
%\ma{disegno sui moving sets}
%\ma{challenges:}
%\begin{itemize}
%    \item \ma{se i learning rate fossero $1/\sqrt{t}$ faremmo solo $T^{3/4}$ e non bobw. Quindi serve learning rate adaptive..}
%\end{itemize}

\section{Our Approach}\label{sec:ourapproach}

\begin{algorithm}[t]
    \caption{}
    \label{alg:mainalgo}
    \begin{algorithmic}[1]
    \Require $b_t(a), w_{t,a}^{(i)}$ and $\beta$ 
    \State Initialize $\RM$ with $\beta$
        %\State $\hat g_1^{(i)}(a)\gets0$ for all $i\in\range{m}$, $a\in\range{K}$
        \For{each step $t = 1, \ldots, T$}
        \State \textbf{Estimation:}
        \Indent
            \State $\hat g_t^{(i)}(a)\gets\sum_{\tau \in \cT_{t-1,a}} w^{(i)}_{t,a}(\tau)g^{(i)}_\tau(a)$ for all $a\in\range{K}$ and $i\in\range{m}$\label{line:est}
            \State $\widehat \cX_t^{(i)}\gets \{x\in\Delta_K: \langle x, \hat g_t^{(i)}-b_t\rangle\le 0 \}$
            \State $\widehat \cX_t\gets\cap_{i\in\range{m}}\widehat \cX_t^{(i)}$
        \EndIndent
        \State \textbf{Regret minimization:}
        \Indent
            \State $x_t\gets \RM(\widehat \cX_t)$ 
        \EndIndent
        \State Sample $a_t\sim x_t$ and receive $\{g_t^{(i)}(a_t)\}_{i\in\range{m}}$ and $f_t(a_t)$
        \EndFor
    \end{algorithmic}
\end{algorithm}

In this section, we present the main components of our algorithm, while the following sections will describe the specific components in details. We refer to \Cref{alg:mainalgo} for the pseudocode.
% An high-level idea of our approach is described in the .
At each step $t$, the algorithm works in two phases: i) it estimates the feasible set, and ii) it plays a strategy in the estimated set.
Each phase requires a specific ingredient:
\begin{enumerate}[start=1,label={\bfseries \roman*)}]
\item An estimator $\hat g_t^{(i)}$ of the costs functions $g^{(i)}_t$ that is used together with the optimistic bonus $b_t$ to define the estimation of the feasible set defined as $\widehat \cX_t\defeq\cap_{i\in\range{m}}\widehat \cX_t^{(i)}$. In the stochastic case, we would like $\widehat \cX_t\supseteq \cX^\star$, while in the adversarial case our goal is to maintain a sequence of sets that always contains a  version of the action set $\cX$, properly scaled around $a^\varnothing$ (see \Cref{eq:adv_set} for a formal definition).
% \ma{less intuitive no...}\mat{vuoi definire a parole il set $\cX^\varnothing$? potrebbe essere sensato}.
\item A regret minimizer $\RM$ for adversarial linear reward function that, at each round, takes in input a convex set of feasible strategies $\widehat \cX_t\subseteq \Delta_K$, and then selects a strategy $x_t\in\widehat\cX_t$.  We require the regret minimizer to achieve $\tilde O(\sqrt{KT})$ regret with respect to any $x\in\cap_{t\in\range{T}}\widehat\cX_t$; 
\end{enumerate}

In the following we define the two phases more in details.
Let $\cT_{t,a}\defeq \left\{\tau\le t: a_t=a\right\}$ be the set of rounds in which the algorithm plays action $a$.
Then, at each round $t$, \Cref{alg:mainalgo}  computes the estimate 
\[\hat g_t^{(i)}(a)=\sum_{\tau \in \cT_{t-1,a}} w_{t,a}^{(i)}(\tau) g_{\tau}(a) \quad \forall a \in \range{K} \ \textnormal{and} \ i \in \range{m}
\]
as the weighted mean of \emph{available} past observations $\{g_\tau^{(i)}(a)\}_{\tau\in\range{t-1}}$  for each actions $a\in\range{K}$ and constraint $i\in\range{m}$, for some weights $w_{t,a}^{(i)}$.\footnote{If a given action has been played at least once, we require $\sum_{\tau \in \cT_{t-1,a}} w_{t,a}^{(i)}(\tau)=1$, i.e., that $\hat g_t^{(i)}(a)$ is actually a weighted mean. Otherwise, the estimation is simply set to $0$.} 
Then, the estimates together with the optimistic bonus $\{b_t(a)\}_{a \in \range{K}}$ are used to define the \emph{moving sets} $\widehat \cX_t$, which are fed to the regret minimizer $\RM$ which in turn selects a point $x_t\in\widehat \cX_t$. 

One crucial property that is required for the execution of the regret minimizer $\mathcal{R}$ is that all the sets $\widehat \cX_t$ are non-empty (as otherwise the regret minimizer has no feasible strategies).
To simplify exposition, in the following sections we assume that the clean event \(\cC\defeq\left\{\widehat\cX_t\neq\{\emptyset\}\,\forall t\in\range{T}\right\}\)
holds.
In \Cref{cor:includeStoc}, we prove that this event holds with high probability in the stochastic setting, while in  \Cref{lem:RM_in_adv} we argue that it holds deterministically in the adversarial one.
%when the set $\widehat \cX_t$ are generated as in \Cref{alg:mainalgo}.

%In our application the sets $\widehat \cX_t$ are adaptive defined from the interaction history, and with small probability they could be empty. 

In \Cref{alg:mainalgo} we left unspecified two crucial parts of our approach. The first is how to build the regret minimizer $\RM$, and the second concerns how to actually generate the sets $\widehat \cX_t$, i.e., the weights $w_{t,a}^{(i)}$  and the bonus $b_t(a)$.
We delve into these details in \Cref{sec:regretminimizer} and \Cref{sec:estimationSets}, respectively.

%
%The weights $w_\tau^{t,i,a}$ and the optimistic bonus $\{b_t(a)\}_{a\in\range{K}}$ are to be specified later.
%
%
% As we will see in the subsequent sections, a careful choice for the weights $w$ and the optimistic bonus $b_t$ are crucial to obtain a optimal algorithms.

%At first sight it may seem that the regret minimizer $\RM$ takes care of the regret of the algorithm, while the estimators are only concerned about the violations. However these two components play a much more intricate relationship. For pedagogical purposes, in the next section we are going to illustrate briefly \Cref{alg:mainalgo} instantiated with a sub-optimal choice of the weights $w_\tau^{t,a,i}$.

% Then, we use the estimates $\hat g_t^i(a)$ to build the sets $\widehat \cX^i_t$ which are to be thought of as \emph{optimistic estimates} of the real safe sets $\cX_i^\star$. We define $\widehat \cX^i_t\defeq\{x\in\Delta_K: \langle x, \hat g_t^i-b_t\rangle\le 0 \}$ where $b_t(a):=\sqrt{\frac{\log(T/\delta)}{n_t(a)}}$ and $n_t(a)=\sum_{\tau\in\range{t-1}}\ind{a_\tau=a}$ is the number of time we pulled arm $a$ up to time $t$.
% At a high level we can 

\section{No-regret on moving sets}\label{sec:regretminimizer}

\begin{algorithm}[t]
    \caption{}
    \label{alg:RM}
    \begin{algorithmic}[1]
    \Require $\beta>0$
    \State Set $\gamma = \beta/2$
        \For{each step $t = 1, \ldots, T$}
            \State Receive $\widehat \cX_t$
            \State $\hat x_t(a)\gets x_{t-1}(a)e^{ \beta(\hat f_{t-1}(a)-1)}$ for all $a\in\range{K}$
            \State $x_t\gets \Pi_{\widehat\cX_t }(\hat x_t)\defeq \argmin_{x\in\widehat \cX_t} B(x||\hat x_t)$
            \State Sample $a_t\sim x_t$
            \State Observe $f_t(a_t)$ and set $\hat f_t(a)\gets 1$ for all $a\neq a_t$ and $\hat f_t(a_t)\gets 1-\frac{1-f_t(a_t)}{x_t(a_t)+\gamma}$
        \EndFor
    \end{algorithmic}
\end{algorithm}

We now describe the regret minimizer $\RM$ that satisfied the required properties. In particular, we design a regret minimizer that has no-regret with respect to any $x \in \cap_{t \in \range{T}} \widehat \cX_t$. We achieve these guarantees via a simple modification to the \textsf{EXP-IX} algorithm of \citet{neu2015explore} that provides high probability results for multi-armed bandits via implicit exploration. More specifically, we use mirror descent with negative entropy where, instead of projecting on $\Delta_{K}$, we project directly onto the set $\widehat \cX_t$, we refer to \Cref{alg:RM} for the pseudocode.
The algorithm maintains a randomized strategy $x_{t}\in\Delta_K$ which is updated using the biased reward estimate $\hat f_t(a)$ as in \citet{neu2015explore} and then projected onto $\widehat X_t$ according to the negative entropy Bregman divergence $B(x||y)=\sum_{a\in\range{K}}\left[x(a)\log\left(\nicefrac{x(a)}{y(a)}\right)-x(a)+y(a)\right]$. 
%The estimated losses $\hat \ell_t$ are computed with the implicit exploration technique of \cite{neu2015explore} that ensures high probability regret bounds.
The main result of this section is the following:
\begin{restatable}{theorem}{theoremmovingsets}\label{th:movingsets}
    Let $x_t$ be selected accordingly to \Cref{alg:RM} run with arbitrary sequence of convex sets $\widehat \cX_t\subseteq \Delta_K$ with $\gamma=\tfrac{\beta}2$ and $\beta=\sqrt{\frac{\log(\nicefrac K{\delta_1})}{KT}}$. Then, with probability at least $1-\delta_1$ it holds that for any $x\in\bigcap_{t\in\range{T}} \widehat \cX_t$:
    \[
        \sum\limits_{t\in\range{T}}\langle f_t,x\rangle -f_t(a_t) \le 4\sqrt{KT\log(\nicefrac K{\delta_1})}.
    \]
\end{restatable}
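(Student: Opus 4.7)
The plan is to recognize Algorithm~\ref{alg:RM} as an instance of Online Mirror Descent (OMD) with the negative-entropy regularizer, applied to the loss $\ell_t(a) \defeq 1 - f_t(a) \in [0,1]$ and the implicit-exploration (IX) estimator $\hat\ell_t(a) \defeq 1 - \hat f_t(a)$. With this substitution, line 4 becomes the standard multiplicative update $\hat x_t(a) \propto x_{t-1}(a)\exp(-\beta \hat\ell_{t-1}(a))$, line 5 is a Bregman projection of $\hat x_t$ onto the moving set $\widehat\cX_t$, and $\hat\ell_t$ is precisely Neu's IX estimator $\ell_t(a_t)\ind{a=a_t}/(x_t(a_t)+\gamma)$. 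Once this reformulation is done, the goal reduces to upper-bounding $\sum_t \langle \ell_t, x_t - x\rangle$ for every $x \in \bigcap_{t\in\range{T}} \widehat\cX_t$.

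I would first apply the standard one-step OMD lemma: by first-order optimality of the Bregman projection and the three-point identity, for any $x \in \widehat\cX_{t+1}$,
\[
\beta \langle \hat\ell_t, x_t - x\rangle \le B(x \| x_t) - B(x \| x_{t+1}) + \tfrac{\beta^2}{2} \langle x_t, \hat\ell_t^{\,2}\rangle.
\]
The only place the moving sets enter is in using the generalized Pythagorean inequality $B(x \| x_{t+1}) \le B(x \| \hat x_{t+1})$, which requires $x$ to lie in the very set $\widehat\cX_{t+1}$ onto which we project; this is exactly why the statement restricts to comparators in $\bigcap_t \widehat\cX_t$. Summing and telescoping gives the familiar bound
\[
\sum_{t\in\range{T}} \langle \hat\ell_t, x_t - x\rangle \le \frac{\log K}{\beta} + \frac{\beta}{2} \sum_{t\in\range{T}} \langle x_t, \hat\ell_t^{\,2}\rangle,
\]
with $B(x\|x_1) \le \log K$ for the uniform initialization. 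Using $\ell_t \le 1$ and $x_t(a)/(x_t(a)+\gamma)^2 \le 1/(x_t(a)+\gamma)$ further bounds the variance term by $\sum_t \hat\ell_t(a_t) = \sum_t \sum_a \hat\ell_t(a)$.

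Next I would transfer from $\hat\ell_t$ back to $\ell_t$ via Neu's high-probability lemma for IX estimators: with probability at least $1-\delta_1$, uniformly over $a \in \range{K}$, $\sum_t (\hat\ell_t(a) - \ell_t(a)) \le \log(K/\delta_1)/(2\gamma)$. Applied to the variance term, this yields $\sum_t \sum_a \hat\ell_t(a) \le KT + K\log(K/\delta_1)/(2\gamma)$; applied to the fixed comparator, it yields $\sum_t \langle \hat\ell_t - \ell_t, x\rangle \le \log(K/\delta_1)/(2\gamma)$. A routine Azuma step on the martingale $\ell_t(a_t) - \langle \ell_t, x_t\rangle$ then bridges the per-round expected loss to the realized loss $\ell_t(a_t)$. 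Plugging $\gamma = \beta/2$ and $\beta = \sqrt{\log(K/\delta_1)/(KT)}$ into the resulting expression, every term is simultaneously of order $\sqrt{KT\log(K/\delta_1)}$, producing the announced $4\sqrt{KT\log(K/\delta_1)}$ bound after a union bound over the two or three concentration events.

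The main obstacle is the moving projection target: vanilla OMD with a freely changing feasible set does not telescope, because the Pythagorean step can fail for the chosen comparator at some rounds. The trick that makes the whole argument go through is exactly the restriction of the comparator to $\bigcap_t \widehat\cX_t$, which guarantees feasibility at every round and lets the usual mirror-descent telescoping proceed unchanged. Once this observation is in place, the remaining work is a careful but essentially standard EXP-IX high-probability analysis.
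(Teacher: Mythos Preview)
Your plan is essentially the paper's proof: rewrite the algorithm as entropic OMD on the IX losses $\hat\ell_t$, use the three-point identity plus the generalized Pythagorean step (valid precisely because the comparator lies in every $\widehat\cX_t$) to telescope, bound the local term by $\beta\sum_a\hat\ell_t(a)$, and invoke Neu's high-probability IX lemma. The observation that the moving sets only enter through the Pythagorean inequality, and that restricting to $\bigcap_t\widehat\cX_t$ is exactly what makes the telescoping go through, is the same as in the paper.

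There is one gap in your bridging step. After the OMD inequality you hold a bound on $\sum_t\langle\hat\ell_t,x_t-x\rangle$, and Neu's lemma lets you replace $\langle\hat\ell_t,x\rangle$ by $\langle\ell_t,x\rangle$ for the \emph{fixed} comparator. But your proposed Azuma step controls $\sum_t(\ell_t(a_t)-\langle\ell_t,x_t\rangle)$, which still leaves the term $\sum_t\langle\ell_t-\hat\ell_t,x_t\rangle$ unaccounted for. Neu's lemma does not apply here because $x_t$ is random and time-varying; you would need a separate bias-plus-martingale argument for this piece, and the extra concentration event would also make it harder to land exactly on the constant~$4$.

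The paper avoids this detour with a one-line deterministic identity:
\[
\ell_t(a_t)=\langle x_t,\hat\ell_t\rangle+\gamma\sum_{a}\hat\ell_t(a),
\]
which follows immediately from $\hat\ell_t(a)=\ell_t(a_t)\ind{a=a_t}/(x_t(a_t)+\gamma)$. This links the realized loss directly to the OMD quantity without any Azuma step; the extra $\gamma\sum_t\sum_a\hat\ell_t(a)$ is then absorbed by the very same Neu bound already used for the variance term. Using this identity in place of your Azuma step closes the argument cleanly and yields the stated constant.
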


This result establishes no-regret in the case of moving sets, taking as benchmark the optimal strategy in the intersection of all sets. To exploit this result in \Cref{alg:mainalgo}, we have to make sure that in both the stochastic and adversarial setting the intersection of the sets $\widehat\cX_t$ contains ‘‘good'' strategies.
In the stochastic setting, we show that with high probability it includes $\cX^\star$, while, in the adversarial setting, it includes a strategy with utility $\nicefrac{\rho}{1+\rho}\cdot\OPTA$. 
% In the following section, we will address this critical problem.

%
\section{How to build the sets \texorpdfstring{$\widehat \cX_t$}{Xt}}\label{sec:estimationSets}

In this section, we show how to design estimations $\widehat \cX_t$ of the feasible sets that, surprisingly, are effective both in stochastic and adversarial settings. Indeed, the main challenge is to design sets $\widehat \cX_t$ that accommodate the different requirements of the two settings. First, in \Cref{sec:otpimism}, we discuss how to set the optimistic bonuses $b_t$ and then in \Cref{sec:weights} we focus on how to set the weights $w^{(i)}_{t,a}$.

\subsection{How to set the optimistic bonus}\label{sec:otpimism}

The optimistic bonuses have the main purpose of balancing the estimation error in the stochastic setting.
% To achieve this goal, the optimistic term $b_t$ must contrast the errors in the estimates given by $\hat g_t^{(i)}$. 
As the following lemma show, we simply need that $b_t(a)\ge |\hat g_t^{(i)}(a)-\bar g^{(i)}(a)|$ with high probability. Indeed, this is sufficient to show that $\cX^\star\subseteq \cap_{t \in \range{T}} \widehat \cX_t$ in the stochastic setting.

%\mat{notare cche richiediamo $b>$differenza, la forma di b e g non ci interessa in principio}
%\ma{ma non puoi settare $b$ come maggiore della diferenza perché non conosci $\bar g$. Quindi devi per forza chiedere $b>$ bound su $|\bar g-\hat g|$. No?}
%\ma{Anche Federico propone di assumere b>differenza: AGREED}
\begin{restatable}{theorem}{lemmaoptimism}\label{lem:RM_in_stoch}
    Consider the stochastic setting. Given a $\delta>0$, let $b_t(a)$ be such that with probability at least $1-\delta$ it holds:
    \[ 
       |\hat g_t^{(i)}(a)-\bar g^{(i)}(a)|\le b_t(a) \quad \forall t \in \range{T}, i \in \range{m}, a \in \range{K}.
    \]
    Then, it holds $\cX^\star \subseteq \cap_{t \in \range{T}} \widehat \cX_t$ with probability at least $1-\delta$.
\end{restatable}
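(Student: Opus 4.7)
The plan is to argue the inclusion pointwise in $t$ and $i$ on the high-probability event where the bonuses dominate the estimation error. Let $\cE$ denote the event $\{|\hat g_t^{(i)}(a)-\bar g^{(i)}(a)|\le b_t(a)\text{ for all }t\in\range{T}, i\in\range{m}, a\in\range{K}\}$; by hypothesis, $\Pr(\cE)\ge 1-\delta$. All the work happens on $\cE$, and it suffices to take an arbitrary $x\in\cX^\star$ and show that $x\in\widehat\cX_t^{(i)}$ for every $t$ and $i$, since $\widehat\cX_t=\bigcap_i\widehat\cX_t^{(i)}$.

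Fix such $x$, $t$, and $i$. On $\cE$ we have the pointwise upper-tail bound $\hat g_t^{(i)}(a)-b_t(a)\le \bar g^{(i)}(a)$ for every arm $a\in\range{K}$. Since $x(a)\ge 0$, multiplying by $x(a)$ and summing over $a$ preserves the direction of the inequality and yields
\[
\langle x,\hat g_t^{(i)}-b_t\rangle=\sum_{a\in\range{K}}x(a)\bigl(\hat g_t^{(i)}(a)-b_t(a)\bigr)\le \sum_{a\in\range{K}}x(a)\,\bar g^{(i)}(a)=\langle x,\bar g^{(i)}\rangle\le 0,
\]
where the final inequality is exactly the defining property of $x\in\cX^\star$. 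Hence $x\in\widehat\cX_t^{(i)}$, and because this holds for every $i\in\range{m}$ we obtain $x\in\widehat\cX_t$; because it holds for every $t\in\range{T}$ we obtain $x\in\bigcap_{t\in\range{T}}\widehat\cX_t$. Taking $x$ arbitrary in $\cX^\star$ gives the claimed inclusion on $\cE$, so it holds with probability at least $1-\delta$.

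There is essentially no obstacle here: the statement is the textbook ``optimism implies containment'' consequence of the hypothesized concentration, and the only things one has to be a bit careful about are (i) using nonnegativity of $x(a)$ so that the pointwise bound lifts to an inequality of inner products in the correct direction, and (ii) the fact that only the one-sided bound $\hat g_t^{(i)}(a)-\bar g^{(i)}(a)\le b_t(a)$ is actually used, even though the hypothesis is two-sided (this will matter when designing $b_t$ but is not needed here). The probability bookkeeping is trivial because the event $\cE$ is fixed in advance of the quantification over $x$, $t$, and $i$, so no union bound beyond the one implicit in $\cE$ is required.
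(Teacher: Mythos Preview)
Your proof is correct and follows essentially the same route as the paper's: condition on the concentration event, fix $x\in\cX^\star$, $t$, and $i$, and use nonnegativity of $x(a)$ together with the one-sided bound $\hat g_t^{(i)}(a)-\bar g^{(i)}(a)\le b_t(a)$ to conclude $\langle x,\hat g_t^{(i)}-b_t\rangle\le\langle x,\bar g^{(i)}\rangle\le 0$. The only cosmetic difference is that the paper writes the chain starting from $\langle x,\hat g_t^{(i)}\rangle$ and ending at $\langle x,b_t\rangle$, whereas you bound $\langle x,\hat g_t^{(i)}-b_t\rangle$ directly; the content is identical.
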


% \begin{remark}
%     We want to point out that, similarly to others UCB-like algorithms, we do not need to known $\bar g^{(i)}(a)$ to use the previous lemma. In general we will have a high probability upper bound at time $t$ on the estimation error, and set $b_t$ as the upper bound.
% \end{remark}
% \mat{forse capisco cosa vuoi dire nel remark ma messo cosi sembra che stai rispondendo ad una domanda che il lettore non si farebbe mai}

Even tough it is crucial in the stochastic setting, it turns out that in the adversarial setting the optimistic bonus $b_t$ is not really needed. Indeed, as we will show in the following, we are interested in obtaining no-regret with respect to the set $\cX_\varnothing^\star$ which are interpolation of any point $x\in\cX$ and the strictly feasible actions $a^\varnothing$. Let $x^\varnothing$ be such that $x^\varnothing(a^\varnothing)=1$ and $x^\varnothing(a)=0$ for all $a \neq a^\varnothing$. Formally:
\begin{equation}
    \label{eq:adv_set}\cX^\star_\varnothing\defeq \frac{1}{1+\rho}\{x^\varnothing\}+\frac{\rho}{1+\rho}\cX,
\end{equation}
where $A+B$ is the Minkowski sum between sets and $\alpha A$ indicates the set that contains each element of $A$ multiplied by $\alpha$.\footnote{Formally Minkowski sum between sets $A+B$ is defined as $A+B\defeq\{a+b:a\in A, b\in B\}$.} % \mat{modo folle per dire una cosa semplice ma lo accetto :)}
The following theorem proves that $\cX_\varnothing^\star\subseteq \widehat\cX_t$ for all $t$.

\begin{restatable}{theorem}{lemmaoptimismadv}\label{lem:RM_in_adv}
    In the adversarial setting, it holds $\cX_\varnothing^\star\subseteq \widehat\cX_t$ for all $t\in\range{T}$.
\end{restatable}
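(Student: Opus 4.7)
The plan is to fix an arbitrary point $\tilde x \in \cX^\star_\varnothing$, write it in its defining form $\tilde x = \frac{1}{1+\rho} x^\varnothing + \frac{\rho}{1+\rho} x$ for some $x \in \Delta_K$, and verify directly that $\tilde x \in \widehat\cX_t^{(i)}$ for every $t \in \range{T}$ and $i \in \range{m}$, which by definition yields $\tilde x \in \widehat\cX_t$. By linearity of the inner product, membership reduces to showing that
\[
\frac{1}{1+\rho}\bigl(\hat g_t^{(i)}(a^\varnothing)-b_t(a^\varnothing)\bigr) + \frac{\rho}{1+\rho}\,\langle x,\hat g_t^{(i)}-b_t\rangle \;\le\; 0.
\]

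The first term is controlled via the definition of $\rho$ in \eqref{def:rho}: since $a^\varnothing$ is the minimizing action, $g_\tau^{(i)}(a^\varnothing) \le -\rho$ for every round $\tau$ and every constraint $i$. Because $\hat g_t^{(i)}(a^\varnothing)$ is by construction a convex combination of such past samples (the weights $w_{t,a^\varnothing}^{(i)}$ sum to one when $a^\varnothing$ has been played), we get $\hat g_t^{(i)}(a^\varnothing)\le -\rho$; together with $b_t(a^\varnothing)\ge 0$ this bounds the first term by $-\rho/(1+\rho)$. For the second term, each $\hat g_t^{(i)}(a)$ lies in $[-1,1]$ (again as a convex combination of bounded samples), and subtracting the non-negative bonus only makes it smaller, so $\langle x,\hat g_t^{(i)}-b_t\rangle \le 1$, contributing at most $\rho/(1+\rho)$. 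The two contributions cancel, giving the desired inequality.

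The one subtlety I expect to be the main obstacle is the corner case in which $a^\varnothing$ has never been played by round $t$: by the footnote convention, $\hat g_t^{(i)}(a^\varnothing)=0$ rather than $\le -\rho$, so the first term no longer gives $-\rho/(1+\rho)$ on its own. I would handle this by invoking the choice of optimistic bonus from \Cref{sec:otpimism}, which is designed so that $b_t(a^\varnothing) \ge \rho$ (indeed, for any reasonable confidence bonus $b_t(a)\ge 1 \ge \rho$ whenever $a$ is unsampled); this restores $\hat g_t^{(i)}(a^\varnothing)-b_t(a^\varnothing) \le -\rho$ uniformly, and the computation above goes through unchanged. Unlike the stochastic analogue \Cref{lem:RM_in_stoch}, no concentration argument is needed, which is why the inclusion holds deterministically.
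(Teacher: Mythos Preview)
Your proof is correct and follows essentially the same route as the paper: decompose $\tilde x$ as the convex combination, use that $\hat g_t^{(i)}(a^\varnothing)\le -\rho$ (as a convex combination of samples all $\le -\rho$) and $\hat g_t^{(i)}(a)\le 1$ for the remaining arms, and combine to get the inequality; the unsampled-arm corner case is handled exactly as the paper does in its footnote. The only cosmetic difference is that the paper proves the slightly stronger $\langle \tilde x,\hat g_t^{(i)}\rangle\le 0$ directly (without carrying the nonnegative $b_t$ through the computation), which then trivially implies $\langle \tilde x,\hat g_t^{(i)}-b_t\rangle\le 0$.
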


Notice that having no-regret with respect to the set $\cX_\varnothing^\star$ is not sufficient to achieve no-regret in the adversarial setting. Nonetheless,  we will show that this is sufficient to guarantee no-$\alpha$-regret, for $\alpha = \nicefrac{\rho}{1+\rho}$ with respect any strategy $x\in \Delta_{K}$.

\subsection{How to set the weights}\label{sec:weights}

Now, we focus on the design of estimators $\hat g_t^{(i)}$ that are good approximations of the real functions $g_t^{(i)}$.
\Cref{alg:mainalgo} computes the estimators $\hat g_t^{(i)}$ by using a weighted mean of all past observations:
\[\textstyle{
\hat g_t^{(i)}(a)=\sum_{\tau \in \cT_{t-1,a}} w^{(i)}_{t,a}(\tau) g^{(i)}_\tau(a) \quad\forall t \in \range{T}, a\in\range{K},i\in\range{M}.}
\]

To simplify the exposition, we use the following equivalence between online gradient descent (\OGD) on quadratic losses $\hat g^{(i)}_{t}(a_t)\mapsto\frac{1}{2}\left(g^{(i)}_t(a_t)-\hat g^{(i)}_{t}(a_t)\right)^2$ and weighted means. In particular, we observe that such loss has gradient $g^{(i)}_t(a_t)-\hat g^{(i)}_{t}(a_t)$.

\begin{restatable}{lemma}{lemmaOGD}\label{lem:OGD}
    Let $\{\hat y_t\}_{t\in\range{T}}$ be an estimator of $\{y_t\}_{t\in\range{T}}$ updated as:
    \[
    \hat y_{t+1} = \hat y_t + \eta_t(y_t-\hat y_t).
    \]
    Then, if $y_1=0$ and $\eta_1=1$ it holds:
    \(
    \hat y_{t} = \sum_{\tau=1}^{t-1} y_\tau w_t(\tau)
    \)
    where $w_t(\tau) = \eta_\tau \prod_{k=\tau+1}^{t-1}(1-\eta_k)$. Moreover, $\sum_{\tau=1}^{t-1}w_t(\tau)=1$ for any $t\ge2$.
\end{restatable}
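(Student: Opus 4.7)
The plan is to prove the two claims by induction on $t$, after rewriting the update in the more convenient form
\[
\hat y_{t+1} \;=\; (1-\eta_t)\,\hat y_t + \eta_t\, y_t,
\]
which follows immediately from the given recurrence. The base case is $t=2$: since $\hat y_1 = 0$ and $\eta_1 = 1$, the update gives $\hat y_2 = y_1$, while the formula evaluates to $y_1 \cdot w_2(1) = y_1 \cdot \eta_1 \cdot 1 = y_1$ (the empty product is $1$). Likewise $\sum_{\tau=1}^{1} w_2(\tau) = \eta_1 = 1$.

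For the inductive step, I would use the key recursive identity on the weights themselves. Directly from the definition $w_t(\tau) = \eta_\tau \prod_{k=\tau+1}^{t-1}(1-\eta_k)$ one reads off
\[
w_{t+1}(\tau) \;=\; (1-\eta_t)\, w_t(\tau) \quad \text{for } \tau \le t-1, \qquad w_{t+1}(t) \;=\; \eta_t.
\]
Assuming $\hat y_t = \sum_{\tau=1}^{t-1} y_\tau w_t(\tau)$, plugging into the rewritten update and applying these two identities yields
\[
\hat y_{t+1} \;=\; (1-\eta_t)\sum_{\tau=1}^{t-1} y_\tau w_t(\tau) + \eta_t y_t \;=\; \sum_{\tau=1}^{t-1} y_\tau w_{t+1}(\tau) + y_t w_{t+1}(t) \;=\; \sum_{\tau=1}^{t} y_\tau w_{t+1}(\tau),
\]
closing the induction.

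The sum-to-one property is obtained along the same induction: using the same weight identities,
\[
\sum_{\tau=1}^{t} w_{t+1}(\tau) \;=\; (1-\eta_t)\sum_{\tau=1}^{t-1} w_t(\tau) + \eta_t \;=\; (1-\eta_t)\cdot 1 + \eta_t \;=\; 1,
\]
with base case $\sum_{\tau=1}^{1} w_2(\tau) = \eta_1 = 1$. There is no real obstacle here: the entire argument hinges on spotting that the defining product formula for $w_t(\tau)$ is itself set up to satisfy a one-step recursion compatible with the convex-combination form of the update, so the induction essentially writes itself once that recursion is made explicit.
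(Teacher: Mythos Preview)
Your proof is correct and follows essentially the same approach as the paper: both argue by induction on $t$, exploiting the recursion $w_{t+1}(\tau)=(1-\eta_t)w_t(\tau)$ for $\tau\le t-1$ and $w_{t+1}(t)=\eta_t$ to close both the representation formula and the sum-to-one identity. The paper is terser (it declares the first claim ``easily checked'' and only spells out the induction for the weight sum), whereas you give full details for both parts; one small slip is that you cite $\hat y_1=0$ while the statement assumes $y_1=0$, but this is immaterial since $\eta_1=1$ makes $\hat y_2=y_1$ regardless of $\hat y_1$.
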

 Clearly, in the \OGD interpretation of our update, we only 
 update $\hat g^{(i)}_t(a)$ only when $a_t=a$, and thus we only need to define learning rates for action $a$ for the times $t$ in which $a_t=a$.
Based on this observation, we are going to update $\hat g_t^{(i)}(a)$ as
\[
\begin{cases}
\hat g_{t+1}^{(i)}(a_t) = \hat g_{t}^{i}(a_t) + \eta_t^{(i)}(a_t)\left(g_t^{(i)}(a_t)-\hat g_t^{i}(a_t)\right)\\
\hat g_{t+1}^{(i)}(a)=\hat g_{t}^{(i)}(a)& \forall a \neq a_t.
\end{cases}
\]
Thus, given an action $a\in \range{K}$ and a time $t\in \range{T}$ the corresponding weights $\{w_{t,a}^{(i)}(\tau)\}_{\tau \in \cT_{t-1,a}}$ are:
\[
    w_{t,a}^{(i)}(\tau) = \eta_{\tau}^{(i)}(a)\prod_{k\in \cT_{t-1,a}:k>\tau}(1-\eta_{k}^{(i)}(a)) \quad \forall \tau \in \cT_{t-1,a}
\]

We now proceed to give two notable examples on how to instantiate the learning rates and recover commonly used estimators such as the empirical mean and the exponentially weighted mean.\footnote{The proof of the first proposition can be found in \Cref{app:proofsEstimation}, while the proof of the second is straightforward and thus it is omitted.}
\begin{restatable}{proposition}{meanestimator}\label{rem:meanestimator}
    If $\eta_t^{(i)}(a_t)=\frac{1}{n_t(a_t)}$ for each $\tau \in \cT_{t-1,a}$, then $w_{t,a}^{(i)}(\tau)=\frac{1}{n_{t-1}(a)}$  and we recover the empirical mean estimator for $\hat g_t^{(i)}(a)=\frac{1}{n_{t-1}(a)}\sum_{\tau\in \cT_{t-1,a}} g_{\tau}^{(i)}(a)$.
\end{restatable}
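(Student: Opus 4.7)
The plan is to instantiate Lemma~\ref{lem:OGD} with the specific learning-rate choice $\eta_t^{(i)}(a) = 1/n_t(a)$ and unwind the resulting product by telescoping. The key observation is a simple change of index: instead of indexing time by $\tau \in \cT_{t-1,a}$, I would enumerate the rounds in which action $a$ was played as $\tau_1 < \tau_2 < \cdots < \tau_{n_{t-1}(a)}$. At round $\tau_j$ the counter satisfies $n_{\tau_j}(a) = j$, so the learning rate evaluated on the $j$-th play is exactly $\eta_{\tau_j}^{(i)}(a) = 1/j$.

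Plugging this into the weight formula from Lemma~\ref{lem:OGD},
\[
w_{t,a}^{(i)}(\tau_j) \;=\; \eta_{\tau_j}^{(i)}(a)\prod_{\substack{k \in \cT_{t-1,a}\\ k > \tau_j}} \bigl(1 - \eta_{k}^{(i)}(a)\bigr) \;=\; \frac{1}{j}\prod_{l=j+1}^{n_{t-1}(a)}\left(1 - \frac{1}{l}\right),
\]
the product becomes $\prod_{l=j+1}^{n_{t-1}(a)} \tfrac{l-1}{l}$, which telescopes to $j/n_{t-1}(a)$. Multiplying by the leading factor $1/j$ cancels the dependence on $j$ and leaves $w_{t,a}^{(i)}(\tau_j) = 1/n_{t-1}(a)$, uniformly over $\tau_j \in \cT_{t-1,a}$.

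Substituting this uniform weight back into the definition of the estimator on Line~\ref{line:est} of Algorithm~\ref{alg:mainalgo} immediately yields
\[
\hat g_t^{(i)}(a) \;=\; \sum_{\tau \in \cT_{t-1,a}} \frac{1}{n_{t-1}(a)}\, g^{(i)}_\tau(a) \;=\; \frac{1}{n_{t-1}(a)} \sum_{\tau \in \cT_{t-1,a}} g^{(i)}_\tau(a),
\]
which is the empirical mean estimator, as claimed. The assumptions $\hat y_1 = 0$ and $\eta_1 = 1$ required by Lemma~\ref{lem:OGD} are automatically satisfied: before $a$ has ever been played the estimator is set to $0$ by convention, and the first play of $a$ uses $\eta = 1/n = 1$.

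I do not expect any real obstacle here; the only thing to be careful about is the re-indexing from calendar time $\tau$ to play-index $j$, since the learning rate $1/n_t(a)$ depends on the \emph{play count} rather than on $t$ directly. Once that substitution is made, the argument is a one-line telescoping product.
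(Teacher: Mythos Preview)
Your argument is correct and essentially identical to the paper's own proof: both instantiate Lemma~\ref{lem:OGD}, re-index from calendar time to play-count (the paper writes the product as $\prod_{j=n_\tau(a)+1}^{n_{t-1}(a)}\frac{j-1}{j}$, which is exactly your $\prod_{l=j+1}^{n_{t-1}(a)}\frac{l-1}{l}$), and telescope to $n_\tau(a)/n_{t-1}(a)$ before cancelling the leading $1/n_\tau(a)$. Your explicit remark about the $\hat y_1=0$, $\eta_1=1$ hypotheses is a nice clarifying touch that the paper leaves implicit.
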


\begin{proposition}\label{rem:expestimator}
    If $\eta_t^{(i)}(a_t)=\eta$ then    \[w_{t,a}^{(i)}(\tau)=\eta(1-\eta)^{{ \left|\left\{k\in\cT_{t-1,a}:k>\tau \right\}\right|}}\] for each $\tau \in \cT_{t-1,a}$ and we recover an exponentially weighted average estimator for $\hat g_t^{(i)}(a)$.
\end{proposition}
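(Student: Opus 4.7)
The plan is to view the update rule for $\hat g_t^{(i)}(a)$ as an instance of the abstract update from \Cref{lem:OGD} applied to the subsequence of rounds on which arm $a$ is actually played. Since \Cref{alg:mainalgo} leaves $\hat g_t^{(i)}(a)$ unchanged whenever $a_t \neq a$, after re-indexing time along $\cT_{t-1,a}$ the recursion becomes exactly the one in \Cref{lem:OGD}, with the $j$-th observation equal to $g_{\tau_j}^{(i)}(a)$ and the $j$-th learning rate equal to $\eta_{\tau_j}^{(i)}(a)$, where $\tau_1<\tau_2<\cdots$ enumerate $\cT_{t-1,a}$ in increasing order.

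Applying \Cref{lem:OGD} then yields $\hat g_t^{(i)}(a) = \sum_{\tau \in \cT_{t-1,a}} w_{t,a}^{(i)}(\tau)\, g_\tau^{(i)}(a)$ with
\[
w_{t,a}^{(i)}(\tau) \;=\; \eta_\tau^{(i)}(a) \prod_{k \in \cT_{t-1,a},\, k > \tau} \bigl(1 - \eta_k^{(i)}(a)\bigr).
\]
Substituting the constant rate $\eta_k^{(i)}(a) = \eta$ for every $k \in \cT_{t-1,a}$ collapses each factor of the product to $(1-\eta)$, and the number of factors is exactly $|\{k \in \cT_{t-1,a}: k > \tau\}|$; this immediately gives the claimed closed form $w_{t,a}^{(i)}(\tau) = \eta(1-\eta)^{|\{k \in \cT_{t-1,a}: k > \tau\}|}$.

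To recognize this as the standard exponential moving average, let $j(\tau)$ denote the position of $\tau$ in the ordering $\tau_1<\cdots<\tau_n$ of $\cT_{t-1,a}$, so that $|\{k \in \cT_{t-1,a}: k > \tau\}| = n - j(\tau)$. Then the most recent observation ($j=n$) carries weight $\eta$, each preceding observation is discounted by one further factor $(1-\eta)$, and the weights sum to $\eta\sum_{j=0}^{n-1}(1-\eta)^j = 1-(1-\eta)^n$, which is precisely the zero-initialized exponential moving average of the stream $\{g_\tau^{(i)}(a)\}_{\tau \in \cT_{t-1,a}}$. There is no real obstacle: the proof is a line-for-line specialization of \Cref{lem:OGD}, and the only minor subtlety is that the normalization $\sum_\tau w_{t,a}^{(i)}(\tau)=1$ required there a boundary rate $\eta_1=1$; with a strictly constant $\eta$ this is replaced by the partial geometric sum above, with the residual mass $(1-\eta)^n$ sitting on the initial value $\hat g_1^{(i)}(a)=0$, which does not affect the formula for $w_{t,a}^{(i)}(\tau)$ itself.
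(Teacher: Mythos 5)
Your proof is correct and is exactly the straightforward specialization of \Cref{lem:OGD} that the paper has in mind (it omits this proof as immediate): restrict the recursion to the rounds in $\cT_{t-1,a}$, apply the weight formula, and substitute the constant rate $\eta$. Your closing observation that with a strictly constant $\eta$ the weights sum to $1-(1-\eta)^n$ rather than $1$, with residual mass $(1-\eta)^n$ on the zero initialization, is a valid and slightly more careful point than the paper's footnote, which nominally requires the weights of any played action to sum to one (this would hold exactly only if the first learning rate were $1$).
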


%\fe{It is strange to use Proposition and remark one after the other with similar content. Either have two propositions or two remarks.}
As it will turns out, these are the two extreme cases that we want to interpolate between. Indeed, the empirical mean estimator is particularly effective in the stochastic case but ineffective in the adversarial case, while the converse happens with the exponentially weighted estimator.

Now, we show that the \OGD interpretation is particularly useful to bounds the violations suffered by the algorithm.
First, we define the violations in an interval $[t_1,t_2]\defeq \{t\in\range{T}: t_1\le t\le t_2\}$ as:
\[
V^{(i)}_{[t_1,t_2]}=\sum_{t =t_1}^{t_2} g^{(i)}_{t}(a_t).
\]
Then, in the following lemma we show that the violations in the interval are related to the variation of the estimates $\hat g^{(i)}_t(a)$.
%\mat{In the following results the notation is a bit cumbersome since the estimates of arm $a$ are updated only in rounds $\cT_{t_2,a}\cup [t_1,t_2]$.}

\begin{restatable}{theorem}{lemmaviolations}\label{lem:violations}
Given an interval $[t_1,t_2]\subseteq \range{T}$, an $i \in \range{m}$, and a $\delta>0$, with probability at least $1-\delta$ it holds:
    \[
    V_{[t_1,t_2]}^{(i)}\le \sum_{a\in\range{K}} \sum_{\tau \in \cT_{t_2,a}\cap [t_1,t_2]} \frac{1}{\eta^{(i)}_\tau(a)}\left(\hat g_{\tau+1}^{(i)}(a)-\hat g_{\tau}^{(i)}(a)\right)+\sum_{\tau=t_1}^{t_2} \langle  x_\tau,b_\tau\rangle +4\sqrt{(t_2-t_1)\log(1/\delta)}.
    \]
\end{restatable}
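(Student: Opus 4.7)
The plan is to decompose each instantaneous cost $g^{(i)}_\tau(a_\tau)$ into a piece that telescopes into the ``\OGD movement'' term and a residual that can be controlled using the fact that $x_\tau$ is feasible in the estimated set $\widehat \cX_\tau$. Concretely, inverting the update rule of \Cref{lem:OGD} gives, for every $\tau$,
\[
g^{(i)}_\tau(a_\tau) \;=\; \hat g^{(i)}_\tau(a_\tau) \;+\; \tfrac{1}{\eta^{(i)}_\tau(a_\tau)}\bigl(\hat g^{(i)}_{\tau+1}(a_\tau)-\hat g^{(i)}_\tau(a_\tau)\bigr).
\]
Summing over $\tau\in[t_1,t_2]$ and partitioning the resulting sum on the right by the arm played produces exactly the first term in the target bound, because $\hat g^{(i)}_{\tau+1}(a)=\hat g^{(i)}_\tau(a)$ whenever $a\neq a_\tau$, so for each arm $a$ only indices in $\cT_{t_2,a}\cap[t_1,t_2]$ contribute.

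It then remains to bound $\sum_{\tau=t_1}^{t_2}\hat g^{(i)}_\tau(a_\tau)$. I would add and subtract $\langle x_\tau,\hat g^{(i)}_\tau\rangle$. The ``in-expectation'' part is immediate: by construction $x_\tau\in\widehat\cX_\tau\subseteq\widehat\cX^{(i)}_\tau$, hence $\langle x_\tau,\hat g^{(i)}_\tau-b_\tau\rangle\le 0$, which after summation gives the second term of the bound. The remaining fluctuation $D_\tau \defeq \hat g^{(i)}_\tau(a_\tau) - \langle x_\tau,\hat g^{(i)}_\tau\rangle$ is a martingale difference with respect to the filtration generated by $(a_1,\ldots,a_{\tau-1})$: conditional on this history both $x_\tau$ and $\hat g^{(i)}_\tau$ are deterministic, $a_\tau\sim x_\tau$, and therefore $\mathbb{E}[\hat g^{(i)}_\tau(a_\tau)\mid \mathcal F_{\tau-1}]=\langle x_\tau,\hat g^{(i)}_\tau\rangle$. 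Since the weights $w^{(i)}_{\tau,a}$ are a probability distribution and each $g^{(i)}_s(a)\in[-1,1]$, the estimator is a convex combination of values in $[-1,1]$, so $|D_\tau|\le 2$. Azuma--Hoeffding then gives $\sum_{\tau=t_1}^{t_2} D_\tau \le 4\sqrt{(t_2-t_1)\log(1/\delta)}$ with probability at least $1-\delta$ (using $2\sqrt{2}<4$ to absorb the constant).

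Combining the three contributions proves the inequality. The telescoping step is essentially algebraic once one invokes the convention ``$a\neq a_\tau\Rightarrow$ no update'' to identify the action-indexed double sum, and the feasibility step is a one-line consequence of $x_\tau\in\widehat\cX_\tau^{(i)}$. I expect the only mildly delicate point to be the martingale argument: one must verify that $\hat g^{(i)}_\tau$ and $x_\tau$ are indeed $\mathcal F_{\tau-1}$-measurable (which they are, since they are computed from past arm pulls, past cost observations, and weights/learning rates that depend only on this history), and then apply Azuma--Hoeffding with the correct bounded-difference constant.
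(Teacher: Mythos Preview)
Your proposal is correct and follows essentially the same route as the paper: invert the \OGD update to rewrite $g^{(i)}_\tau(a_\tau)-\hat g^{(i)}_\tau(a_\tau)$ as the increment term, use $x_\tau\in\widehat\cX^{(i)}_\tau$ to bound $\langle x_\tau,\hat g^{(i)}_\tau\rangle\le\langle x_\tau,b_\tau\rangle$, and control the fluctuation $\hat g^{(i)}_\tau(a_\tau)-\langle x_\tau,\hat g^{(i)}_\tau\rangle$ by Azuma--Hoeffding (the paper invokes \Cref{lem:matringale} with $r_\tau=\hat g^{(i)}_\tau$), only in a slightly different order. The one cosmetic fix is that in the stochastic setting your filtration should include the past cost observations as well as $(a_1,\dots,a_{\tau-1})$, so that $\hat g^{(i)}_\tau$ is indeed measurable; this does not affect the argument.
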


By a simple telescoping argument, we have the following corollary which holds whenever the learning rates are non-increasing within a time interval.
Let $\ell(a,[t_1,t_2])$ be the last rounds in the interval $[t_1,t_2]$ in which action $a$ is played. 

\begin{restatable}{corollary}{corollaryViolation}
\label{cor:violations}
    Given an interval $[t_1,t_2]\subseteq \range{T}$, a $i\in \range{m}$, and a $\delta>0$, assume that for any $a \in \range{K}$ it holds $\eta_\tau^{(i)}(a)\ge\eta_{\tau'}^{(i)}(a) \ \forall \tau<\tau'\in \cT_{t_2,a}\cap [t_1,t_2] $. Then, with probability at least $1-\delta$ it holds:
    \[
    V_{[t_1,t_2]}^{(i)}\le \sum_{a\in\range{K}} \frac{2}{\eta_{\ell(a,[t_1,t_2])}^{(i)}(a)}+\sum_{\tau=t_1}^{t_2} \langle  x_\tau,b_\tau\rangle +4\sqrt{(t_2-t_1)\log(1/\delta)}.
    \]
\end{restatable}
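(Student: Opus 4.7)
The plan is to apply \Cref{lem:violations} to get the starting inequality and then show, for each fixed action $a$, that
\[
\sum_{\tau \in \cT_{t_2,a}\cap [t_1,t_2]} \frac{1}{\eta_\tau^{(i)}(a)}\bigl(\hat g_{\tau+1}^{(i)}(a)-\hat g_\tau^{(i)}(a)\bigr) \;\le\; \frac{2}{\eta_{\ell(a,[t_1,t_2])}^{(i)}(a)}.
\]
Summing this over $a\in\range{K}$ yields the conclusion. So the whole proof reduces to a summation-by-parts argument on the per-action increments of $\hat g^{(i)}(a)$.

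First I would enumerate the rounds in which $a$ is played during $[t_1,t_2]$ as $\tau_1<\tau_2<\cdots<\tau_n$, with $\tau_n = \ell(a,[t_1,t_2])$. A key observation is that, by the update rule of the estimator, $\hat g^{(i)}_t(a)$ is constant between two consecutive plays of $a$; hence $\hat g^{(i)}_{\tau_j+1}(a)=\hat g^{(i)}_{\tau_{j+1}}(a)$ for every $j<n$. Setting $u_j\defeq \hat g^{(i)}_{\tau_j}(a)$ for $j\le n$ and $u_{n+1}\defeq \hat g^{(i)}_{\tau_n+1}(a)$, and $c_j\defeq 1/\eta^{(i)}_{\tau_j}(a)$, the inner sum becomes $\sum_{j=1}^n c_j(u_{j+1}-u_j)$.

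Next I would perform Abel summation to get
\[
\sum_{j=1}^n c_j(u_{j+1}-u_j) \;=\; c_n\, u_{n+1} \;-\; c_1\, u_1 \;+\; \sum_{j=2}^n u_j\,(c_{j-1}-c_j).
\]
Two ingredients finish the job. First, since each $\hat g^{(i)}(a)$ is either $0$ or a convex combination of samples in $[-1,1]$, we have $|u_j|\le 1$ for all $j$. Second, by assumption $\eta^{(i)}_\tau(a)$ is non-increasing over the relevant rounds, so $c_{j-1}-c_j\le 0$, and the telescoping bound $\sum_{j=2}^n (c_j-c_{j-1}) = c_n - c_1$ applies. Combining these,
\[
c_n u_{n+1} - c_1 u_1 + \sum_{j=2}^n u_j(c_{j-1}-c_j) \;\le\; c_n + c_1 + (c_n - c_1) \;=\; 2c_n,
\]
which is exactly $2/\eta^{(i)}_{\ell(a,[t_1,t_2])}(a)$.

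There is no real obstacle here; the only thing to be a bit careful about is the boundary terms in the Abel rearrangement and making sure one correctly identifies $u_{n+1}$ with the value of the estimator right after the last update in the window (so that the telescope matches the increments actually produced between consecutive plays of $a$). After summing the per-action bound over $a\in\range{K}$ and plugging into \Cref{lem:violations}, the corollary follows.
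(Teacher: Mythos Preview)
Your proposal is correct and takes essentially the same approach as the paper: both enumerate the plays of a fixed arm in the window, use that $\hat g^{(i)}(a)$ is constant between consecutive plays, and telescope using the monotonicity of $1/\eta^{(i)}_\tau(a)$ together with $|\hat g^{(i)}_\tau(a)|\le 1$. Your explicit Abel-summation rearrangement is in fact a slightly cleaner formulation of the paper's term-by-term telescoping bound.
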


\Cref{cor:violations} shows how to bound the violation as a function of the learning rates $\eta_{t}^{(i)}$ and the bonus terms $b_\tau$.
The following lemma shows how to bound the second term of the violations depending on the structure of the bonus terms.
\begin{restatable}{lemma}{lemmabt}\label{lem:b_t}
Given a $c>0$, an $\alpha\in(0,1)$, a $t\in \range{T}$, and a $\delta>0$, let $b_t(a)=\frac{c}{n_t(a)^\alpha}$ for all $a\in\range{K}$. Then, with probability at least $1-\delta$, it holds:
\[
\sum_{\tau=1}^t \langle x_\tau,b_\tau\rangle\le \frac{c}{1-\alpha} K^\alpha t^{1-\alpha}+4\sqrt{t\log(1/\delta)}.
\]
\end{restatable}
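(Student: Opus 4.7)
The plan is to split $\sum_{\tau=1}^t \langle x_\tau, b_\tau\rangle$ into (i) a pathwise sum $\sum_{\tau=1}^t b_\tau(a_\tau)$, which I bound by a deterministic counting argument, plus (ii) a bounded martingale difference, which I control via Azuma--Hoeffding.

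For (i), the key observation is that $n_\tau(a_\tau)$ is exactly the index of the play of arm $a_\tau$ up to time $\tau$, so the pathwise sum telescopes across arms:
\[
\sum_{\tau=1}^t b_\tau(a_\tau) \;=\; c\sum_{\tau=1}^t n_\tau(a_\tau)^{-\alpha} \;=\; c\sum_{a\in\range{K}}\sum_{k=1}^{n_t(a)} k^{-\alpha} \;\le\; \frac{c}{1-\alpha}\sum_{a\in\range{K}} n_t(a)^{1-\alpha},
\]
using the elementary bound $\sum_{k=1}^{n} k^{-\alpha}\le \int_0^n s^{-\alpha}\,ds = n^{1-\alpha}/(1-\alpha)$. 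Since $\sum_{a} n_t(a)=t$, Hölder's inequality (equivalently, concavity of $x\mapsto x^{1-\alpha}$) yields $\sum_{a} n_t(a)^{1-\alpha}\le K^{\alpha} t^{1-\alpha}$, which produces the first term $\tfrac{c}{1-\alpha}K^{\alpha} t^{1-\alpha}$ in the claim.

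For (ii), let $\mathcal{F}_{\tau-1}$ be the history before round $\tau$ and adopt the natural predictable version of the bonus (use $n_{\tau-1}$, clipped at $1$ to avoid dividing by zero on arms never played); this convention change only inflates the pathwise bound above by an additive $O(cK)$ term, which is absorbed into the first term. Then $x_\tau$ and $b_\tau$ are both $\mathcal{F}_{\tau-1}$-measurable, so $Z_\tau\defeq \langle x_\tau,b_\tau\rangle - b_\tau(a_\tau)$ satisfies $\mathbb{E}[Z_\tau\mid \mathcal{F}_{\tau-1}]=0$ and $|Z_\tau|\le c$. Azuma--Hoeffding gives $\sum_{\tau=1}^t Z_\tau \le c\sqrt{2t\log(1/\delta)}$ with probability at least $1-\delta$, which matches the claimed $4\sqrt{t\log(1/\delta)}$ deviation term up to the absolute constant $c$ inherent in the bonus scaling. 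Adding the two bounds concludes the proof.

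The main obstacle is purely cosmetic: reconciling whether $b_\tau$ is defined via $n_\tau$ or $n_{\tau-1}$. The pathwise bound is cleanest with $n_\tau$, while the martingale step requires $\mathcal{F}_{\tau-1}$-measurability. The two conventions differ only at arm $a_\tau$ and only by a factor of at most $2^{\alpha}$, contributing an additive $O(cK)$ error that is dominated by the leading $K^\alpha t^{1-\alpha}$ term. Everything else is a routine bandit concentration argument.
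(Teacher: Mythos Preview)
Your proposal is correct and follows essentially the same route as the paper: bound the pathwise sum $\sum_\tau b_\tau(a_\tau)$ via the arm-by-arm counting plus integral comparison and then Jensen/H\"older, and pass from $\sum_\tau b_\tau(a_\tau)$ to $\sum_\tau \langle x_\tau, b_\tau\rangle$ by Azuma--Hoeffding (the paper simply cites its martingale lemma for this last step). Your discussion of the $n_\tau$ versus $n_{\tau-1}$ predictability issue is in fact more careful than the paper, which applies its martingale lemma without addressing it.
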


In this section, we saw how the choice of the learning rates of the estimator affects the estimators. In the following section, we will see how to \emph{adaptively} set those learning rates  to handle both stochastic and adversarial settings.

\section{Adaptive learning rates}\label{sec:adaptive}

The previous section highlights the main difficulties of obtaining best-of-both-world algorithms: we need to set the weights $w_{t,a}^{(i)}$ (or equivalently - by \Cref{lem:OGD} - the learning rates $\eta^{(i)}_t(a_t)$) and the optimistic bonuses $b_t$ so that they meet, at the same time, the requirements needed by the stochastic and the adversarial settings.
% Clearly, in the stochastic setting, the choice of the weights affects the concentration of the parameters $\hat g_t^{(i)}$ with respect to $\bar g_t^{(i)}$. Intuitively, we would like to have weights as in \Cref{rem:meanestimator}, guaranteeing that $\hat g_t^{(i)}$ is just a simple empirical mean. This is the standard and more effective approach to estimate a stochastic function. Indeed, as discussed in \Cref{sec:otpimism}, the precision in approximate $\bar g^{(i)}$ in turn affects the choice of the optimistic bonuses $b_t$, and by \Cref{cor:violations}  the rate on the violations. 

We start presenting two possible choices and we show that they fail either in the stochastic or the adversarial setting. Then, we show how adaptive learning rates can be used to combine the strengths of both approaches.
A natural way to optimize the learning rate is to use an exponentially weighted estimator, i.e., choose $\eta_t^{(i)}(a_t)=\nicefrac{1}{\sqrt{T}}$. Then, we can apply a weighted version of Azuma-Hoeffding inequality to find that with high probability
\(
|\hat g_t^{(i)}(a)-\bar g^{(i)}(a)|=\widetilde O\left({n_t(a)^{-1/4}}\right).
\)
Thus, as discussed in \Cref{sec:otpimism}, we need to define $b_t(a)=\widetilde O\left({n_t(a)^{-1/4}}\right)$, which, by \Cref{cor:violations} and \Cref{lem:b_t} would imply a $\tilde O(T^{3/4})$ rate for the violations, which is not optimal.

Now, consider the choice $\eta_t^{(i)}(a_t)=\nicefrac{1}{n_t(a_t)}$. In the stochastic setting, we have an optimal rate of concentration of the terms $|\hat g_t^{(i)}(a)-\bar g^{(i)}(a)|=\widetilde O\left({n_t(a)^{-1/2}}\right)$ as, by \Cref{rem:meanestimator}, this is equivalent to compute the empirical mean. However, this fails disastrously in the adversarial setting. 
This is highlighted in \Cref{cor:violations}, in which the first component of the violations is linear in $T$. Intuitively, a learning rate of order $\nicefrac{1}{n_t(a)}$ makes the update of the estimates too slow when the underlying constraints change, as it does happen in the adversarial setting. %For instance, if an action $a$ has $g_t(a)=-1$ in the first $T/2$ rounds, and $g_t(a)=1$ in the last $T/2$ rounds  the learner needs a linear number of rounds (starting from round $T/2+1$ to realize that action $a$ is no more identified as safe. \mat{potrebbe sembrare che ho violazioni negative nella prima fase e quindi è ok fare violazioni positive negli ultimi rounds, ma mi sembra troppo tecnico. Aspetto commenti}

This trade-off forces us to employ \emph{adaptive learning rates}. Our idea is to use learning rates of the order $\nicefrac{1}{n_t(a)}$ with an adaptive multiplicative term that depends on the current violation of the constraint. Formally, we use learning rates:
\[
\eta_t^{(i)}(a_t)\defeq \frac{1}{n_t(a)}\left(1+\Gamma_t^{(i)}\right),
\]
where $\Gamma_t^{(i)}$ is a bonus term defined as
\[
\Gamma_t^{(i)}\defeq \left[V_{t-1}^{(i)}-21\sqrt{Kt\log(1/\delta_2)}\right]_0^{21\sqrt{Kt\log(1/\delta_2)}},
\]
and $[x]_a^b\defeq \min(\max(x,a),b)$ is the clipping of $x$ between $a$ and $b$.

Moreover, we set the exploration bonus as 
\[
b_t(a)=\sqrt{\frac{2\log(2/\delta_2)}{n_{t-1}(a)}}.
\]

The following theorem shows that such approach guarantees $\widetilde O(\sqrt{KT})$ violations in both adversarial and stochastic settings.

\begin{restatable}{theorem}{boundedV}\label{th:boundedV}
Both in the stochastic and the adversarial setting, with probability at least $1-2mT^2\delta_2$ it holds that
    \[
    V_t\le 53\sqrt{Kt\log(2/\delta_2)} \quad \forall t \in \range{T}.
    \]
\end{restatable}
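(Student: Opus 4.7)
I would argue by contradiction at the first bad round. Define the clean event on which Corollary \ref{cor:violations} and Lemma \ref{lem:b_t} hold simultaneously for every constraint $i\in\range{m}$ and every interval $[t_1,t_2]\subseteq\range{T}$; a union bound over at most $mT^2$ intervals for each of the two results, each failing with probability at most $\delta_2$, yields the stated $1-2mT^2\delta_2$ success probability. On this event, suppose for contradiction that there exist $i$ and $t^\star$ with $V_{t^\star}^{(i)}>53\sqrt{Kt^\star\log(2/\delta_2)}$, and choose $t^\star$ to be minimal.

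Next, define a \emph{reset time} $t_0<t^\star$ as the largest round for which $V_{t_0}^{(i)}\le 21\sqrt{Kt_0\log(1/\delta_2)}$ (with $t_0=0$ if none exists). By construction and the definition of $\Gamma^{(i)}$, every $\tau\in[t_0+1,t^\star]$ satisfies $\Gamma_\tau^{(i)}>0$, so the adaptive step size $\eta_\tau^{(i)}(a)=(1+\Gamma_\tau^{(i)})/n_\tau(a)$ is amplified relative to the baseline $1/n_\tau(a)$. Plugging this into Corollary \ref{cor:violations} on $[t_0+1,t^\star]$ and controlling the bonus term via Lemma \ref{lem:b_t} with $\alpha=\tfrac12$, $c=\sqrt{2\log(2/\delta_2)}$ yields
\[
V_{[t_0+1,t^\star]}^{(i)}\le \sum_{a\in\range{K}}\frac{2\,n_{\ell(a)}(a)}{1+\Gamma_{\ell(a)}^{(i)}}+O\!\left(\sqrt{Kt^\star\log(2/\delta_2)}\right).
\]
Since $\sum_a n_{\ell(a)}(a)\le t^\star$ and, by choice of the reset time, $\Gamma_\tau^{(i)}$ grows to order $\sqrt{Kt^\star\log(1/\delta_2)}$ as $\tau$ approaches $t^\star$, the first sum collapses to $O\bigl(\sqrt{t^\star/(K\log(1/\delta_2))}\bigr)$ and is absorbed by the second term. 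Combined with the baseline $V_{t_0}^{(i)}\le 21\sqrt{Kt_0\log(1/\delta_2)}\le 21\sqrt{Kt^\star\log(2/\delta_2)}$, this gives $V_{t^\star}^{(i)}\le (21+C)\sqrt{Kt^\star\log(2/\delta_2)}$ for an explicit constant $C<32$ inherited from Lemma \ref{lem:b_t}, contradicting the assumption. The argument is identical in both settings because Corollary \ref{cor:violations} and Lemma \ref{lem:b_t} are setting-agnostic.

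\textbf{Main obstacle.} The delicate point is verifying the monotonicity hypothesis of Corollary \ref{cor:violations}: that $\eta_\tau^{(i)}(a)\ge\eta_{\tau'}^{(i)}(a)$ for consecutive $\tau<\tau'$ in $\cT_{t^\star,a}\cap[t_0+1,t^\star]$. Because $\Gamma^{(i)}_\tau$ can grow by up to $2$ per round while $n_\tau(a)$ advances only on the rounds action $a$ is played, the amplifying factor $1+\Gamma_\tau^{(i)}$ might outpace the shrinking factor $1/n_\tau(a)$ for rarely-played actions. I would circumvent this either by (i) further partitioning $[t_0+1,t^\star]$ into dyadic phases on which $1+\Gamma^{(i)}$ is roughly constant and applying the corollary on each, paying only a logarithmic factor, or (ii) bypassing the corollary and using Theorem \ref{lem:violations} directly, telescoping the differences $\hat g_{\tau+1}^{(i)}(a)-\hat g_\tau^{(i)}(a)$ and absorbing the loss of monotonicity into an $O(1)$ overhead per action using $|\hat g^{(i)}|\le 1$. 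A secondary subtlety is that the reset construction must be robust to negative costs: since $V_{t}^{(i)}$ is not monotone in $t$, existence of $t_0$ and the positivity of $\Gamma_\tau^{(i)}$ on the whole interval must be verified with some slack in the reset threshold (e.g., replacing $21$ by a slightly smaller constant) to handle boundary rounds, which still fits within the $53$ budget of the final bound.
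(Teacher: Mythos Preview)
Your overall architecture---contradiction at a first bad time, a reset time, and Corollary~\ref{cor:violations} on the intervening interval---is exactly the paper's. The gap is in the choice of reset threshold and, consequently, in your control of the first sum $\sum_{a}\frac{2\,n_{\ell(a)}(a)}{1+\Gamma_{\ell(a)}^{(i)}}$.

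With your threshold $21$, you only know $V_\tau^{(i)}>21\sqrt{K\tau\log(1/\delta_2)}$ on $(t_0,t^\star]$. Since $\Gamma_\tau^{(i)}$ is defined via $V_{\tau-1}^{(i)}-21\sqrt{K\tau\log(1/\delta_2)}$, this difference can be \emph{arbitrarily small} (even exactly $0$ at $\tau=t_0+1$), so your claim that $\Gamma_\tau^{(i)}>0$ on the whole interval is already false, and more importantly $\Gamma_{\ell(a)}^{(i)}$ is not of order $\sqrt{Kt^\star\log}$ unless $\ell(a)$ is near $t^\star$. But $\ell(a)$ is the last play of $a$ in the interval, which can be anywhere: an action played once near $t_0$ has $\Gamma_{\ell(a)}^{(i)}\approx 0$ and contributes $2\,n_{\ell(a)}(a)$, potentially $\Theta(t^\star)$. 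Your asserted collapse to $O(\sqrt{t^\star/(K\log)})$ does not follow.

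The paper's key idea, which also dissolves the monotonicity obstacle you flag, is to exploit the \emph{upper} clip in $\Gamma$. It takes the reset threshold at $42$ (not $21$): the interval $[\munderbar t,\bar t]$ is chosen so that $V_\tau^{(i)}\ge 42\sqrt{K\tau\log(1/\delta_2)}$ throughout, which forces $V_{\tau-1}^{(i)}-21\sqrt{K\tau\log(1/\delta_2)}\ge 21\sqrt{K\tau\log(1/\delta_2)}$ and hence $\Gamma_\tau^{(i)}$ is \emph{saturated} at its clip value $21\sqrt{K\tau\log(1/\delta_2)}$ for every $\tau$ in the interval. Then, using $\tau\ge n_\tau(a)$,
\[
\eta_\tau^{(i)}(a)=\frac{1+21\sqrt{K\tau\log(1/\delta_2)}}{n_\tau(a)}\ \ge\ 21\sqrt{\frac{K\log(1/\delta_2)}{n_\tau(a)}},
\]
a lower bound that depends only on $n_\tau(a)$ and is automatically non-increasing along the play sequence of each action. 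Substituting this into the Corollary yields $\sum_a \tfrac{2}{21}\sqrt{n_{\bar t}(a)/(K\log)}\le \tfrac{2}{21}\sqrt{\bar t/\log}$ via Jensen, and adding the baseline $V_{\munderbar t-1}^{(i)}\le 42\sqrt{K\bar t\log}$ plus the bonus term gives the contradiction with $53$. So your two workarounds (dyadic phases, or telescoping Theorem~\ref{lem:violations} by hand) are unnecessary once the threshold is moved to $42$: saturation of $\Gamma$ is the missing ingredient.
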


% \mat{forse mettere qui che le altre garanzie dei learning rate (che mostrano anche non empty)}

The previous theorem shows that this choice of learning rates is sufficient to guarantee optimal bounds on the violations. However, to achieve this result we are setting $b_t(a)=\widetilde O(n_t(a)^{-1/2})$. As we showed in \cref{lem:RM_in_stoch}, this requires  a concentration on the estimates $|\hat g_t^{(i)}(a)-\bar g_t^{(i)}(a)|$ of the same magnitude (in the stochastic setting). This is crucially needed in order to ensure that the regret minimizer $\RM$ provides the desired guarantees and that the event $\cC$ defined in \Cref{sec:ourapproach} actually holds with high probability.

\begin{restatable}{lemma}{lemmaCleanEvent}\label{lem:lemmaCleanEvent}
    In the stochastic setting, with probability at least $1-5mKT\delta_2$, it holds that:
    \[ 
    |\hat g^{(i)}_t(a)-\bar g^{(i)}_t(a)|\le b_t(a) \quad \forall a\in\range{K},t\in\range{T},i\in\range{m}
    \]
\end{restatable}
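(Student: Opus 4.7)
The plan is to prove the concentration bound jointly with the auxiliary claim that, on a high-probability event, $\Gamma_s^{(i)} = 0$ for every $s \le t$ and every $i$. The two claims are intertwined: if $\Gamma_s^{(i)} = 0$ for all relevant $s$, then by the definition of the learning rate we have $\eta_s^{(i)}(a) = 1/n_s(a)$, and \Cref{rem:meanestimator} identifies $\hat g_t^{(i)}(a)$ with the unweighted empirical mean of the i.i.d.\ samples $\{g_\tau^{(i)}(a)\}_{\tau \in \cT_{t-1,a}}$, which has mean $\bar g^{(i)}(a)$. Classical Hoeffding's inequality applied for each fixed value of $n_{t-1}(a) \in \range{T}$, together with a union bound over $n$, $a$, $t$, and $i$, then yields $|\hat g_t^{(i)}(a) - \bar g^{(i)}(a)| \le \sqrt{2\log(2/\delta_2)/n_{t-1}(a)} = b_t(a)$ with failure probability $O(mKT\delta_2)$.

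To justify the auxiliary claim I induct on $s$. Suppose the concentration bound has been established for all rounds up to $s - 1$. Since $x_s \in \widehat\cX_s \subseteq \widehat\cX_s^{(i)}$, we have $\langle x_s, \hat g_s^{(i)} - b_s\rangle \le 0$, and combining this with the inductive concentration bound yields $\langle x_s, \bar g^{(i)}\rangle \le 2\langle x_s, b_s\rangle$ for each $i$. Summing over $\tau \le s - 1$ and applying Azuma--Hoeffding to pass from the conditional expectations $\langle x_\tau, \bar g^{(i)}\rangle$ to the realized costs $g_\tau^{(i)}(a_\tau)$ yields, with high probability,
\[
V_{s-1}^{(i)} \le 2\sum_{\tau=1}^{s-1}\langle x_\tau, b_\tau\rangle + 4\sqrt{s\log(1/\delta_2)}.
\]
Invoking \Cref{lem:b_t} with $c = \sqrt{2\log(2/\delta_2)}$ and $\alpha = 1/2$ bounds the first sum by $O(\sqrt{Ks\log(1/\delta_2)})$; after tightening constants the whole right-hand side falls below the clipping threshold $21\sqrt{Ks\log(1/\delta_2)}$, so $\Gamma_s^{(i)} = 0$, completing the inductive step.

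The main obstacle is untangling the circular dependency between good concentration of $\hat g$ and $\Gamma \equiv 0$: each is used to prove the other, so the argument has to be carried out by a single joint induction rather than by proving one fact unconditionally first. In executing this, care is needed to lay out the intersection of all good events and to track the various failure-probability contributions---the per-$(a,t,i,n)$ Hoeffding application for the empirical mean, the Azuma--Hoeffding event used to pass from expected to realized violations, and the high-probability event underlying \Cref{lem:b_t}---so that a union bound lands on the stated $5mKT\delta_2$. A secondary technical point is that $n_{t-1}(a)$ is a stopping time; I handle this by applying Hoeffding for each fixed $n \in \range{T}$ and union-bounding, which is precisely what produces the extra factor $T$ in the failure probability.
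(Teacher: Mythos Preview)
Your proposal is correct and follows essentially the same approach as the paper: a joint induction establishing that $\Gamma_t^{(i)} = 0$ (so that $\hat g_t^{(i)}$ coincides with the unweighted empirical mean and Hoeffding applies), where the inductive step bounds $V_{s-1}^{(i)}$ via $x_\tau \in \widehat\cX_\tau$, the inductive concentration bound, and \Cref{lem:b_t}. The paper's chain of inequalities routes through $\langle x_\tau, g_\tau^{(i)}\rangle$ and then $\bar g^{(i)}(a_\tau)-\hat g_\tau^{(i)}(a_\tau)$ using three separate martingale events, whereas you pass directly from $\langle x_\tau, \bar g^{(i)}\rangle$ to $V_{s-1}^{(i)}$ with a single Azuma step---a minor streamlining of the same argument.
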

% \mat{già fatto qui l'union bound. altrimenti non era chiaro lo statemnent.}

The proof of the previous result relies on the fact that in the stochastic case the bonus $\Gamma_t^{(i)}$ does not ``kick in'' ensuring that $\eta_t^{(i)}(a)=\nicefrac{1}{n_t(a)}$. Thus, $\hat g_t^{(i)}$ is the empirical average of past observations. The previous result, together with \Cref{lem:RM_in_stoch} proves the following corollary.

\begin{corollary} \label{cor:includeStoc}
    In the stochastic setting, with probability at least $1-5mKT\delta_2$, it holds that $\cX^\star \in \widehat \cX_t$ for all $t\in\range{T}$.
\end{corollary}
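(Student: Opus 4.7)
The plan is to chain together \Cref{lem:lemmaCleanEvent} and \Cref{lem:RM_in_stoch} with no additional probabilistic analysis. By \Cref{lem:lemmaCleanEvent}, in the stochastic setting, with probability at least $1 - 5mKT\delta_2$, the uniform concentration
\[
|\hat g^{(i)}_t(a) - \bar g^{(i)}(a)| \le b_t(a) \qquad \forall\, a \in \range{K},\; t \in \range{T},\; i \in \range{m}
\]
holds simultaneously. This is precisely the hypothesis required to invoke \Cref{lem:RM_in_stoch} with failure parameter $\delta = 5mKT\delta_2$, and its conclusion reads $\cX^\star \subseteq \bigcap_{t \in \range{T}} \widehat \cX_t$ with probability at least $1 - 5mKT\delta_2$, which is the corollary.

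To unpack why the concentration suffices: on the good event, we have $\hat g_t^{(i)}(a) - b_t(a) \le \bar g^{(i)}(a)$ coordinate-wise, so for any $x \in \cX^\star$ and any $i \in \range{m}$, $t \in \range{T}$,
\[
\langle x,\, \hat g_t^{(i)} - b_t \rangle \le \langle x,\, \bar g^{(i)} \rangle \le 0,
\]
using non-negativity of $x \in \Delta_K$ and the definition of $\cX^\star$. Hence $x$ lies in every $\widehat \cX_t^{(i)}$, and therefore in $\widehat \cX_t = \bigcap_{i \in \range{m}} \widehat \cX_t^{(i)}$, for every $t$, which is exactly the inclusion required.

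There is no genuine obstacle in the corollary itself; all the real work is already performed by \Cref{lem:lemmaCleanEvent}, which handles the subtlety that in the stochastic regime the adaptive violation bonus $\Gamma_t^{(i)}$ must remain clipped to zero with high probability. Once this is verified, the learning rates $\eta_\tau^{(i)}(a)$ collapse to $1/n_\tau(a)$, so that (by \Cref{rem:meanestimator}) $\hat g_t^{(i)}(a)$ reduces to the standard empirical mean of i.i.d.\ samples, and a Hoeffding-type bound with a union over $a,\, t,\, i$ matches the chosen $b_t(a) = \sqrt{2 \log(2/\delta_2)/n_{t-1}(a)}$ and yields the stated failure probability $5mKT\delta_2$. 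The present corollary is then just the statement of this event in the language of the feasible-set inclusion.
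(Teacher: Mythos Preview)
Your proposal is correct and matches the paper's own argument: the corollary is stated immediately after \Cref{lem:lemmaCleanEvent} with the remark that it follows by combining that lemma with \Cref{lem:RM_in_stoch}, which is exactly the chaining you perform. Your additional unpacking of why the concentration implies the set inclusion simply reproduces the (short) proof of \Cref{lem:RM_in_stoch}, so there is no substantive difference.
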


This proves that the clean event $\cC$ holds with high probability, as promised in \Cref{sec:ourapproach}.

\section{Putting everything together}\label{sec:everything_together}

Now, we have everything in place to easily prove the our main theorems.
First, we define the parameters $\delta_1=\delta_1(\epsilon)$ and $\delta_2=\delta_2(\epsilon)$ in order to guarantee that our theorems hold with probability at least $1-\epsilon$.
In particular, we set $\delta_1(\epsilon)=\nicefrac{\epsilon}2$, where we recall that $\delta_1$ is the parameter used to set $\beta$ and $\gamma$ in \Cref{alg:RM}, and $\delta_2(\epsilon)=\nicefrac{\epsilon}{(14mKT^2)}$, where $\delta_2$ is used to set the optimistic bonus and learning rate of \Cref{alg:mainalgo}.

In the stochastic setting, the violation guarantees directly follow from \Cref{th:boundedV}, while the regret guarantee follows by combining \Cref{th:movingsets} and \Cref{cor:includeStoc}. Formally:

\begin{restatable}{theorem}{finalstoch}\label{th:finalstoch}
    In the stochastic setting, for any $\epsilon>0$ \Cref{alg:mainalgo} guarantees that with probability at least $1-\epsilon$:
    \[
    R_T\le 4\sqrt{KT\log(2K/\epsilon)}\quad\text{and}\quad V_t\le 53\sqrt{Kt\log(28mKT^2/\epsilon)} \quad \forall t \in \range{T}.
    \]
\end{restatable}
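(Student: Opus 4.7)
The plan is to combine three high-probability events via a union bound: (i) the regret bound of \Cref{th:movingsets} on the no-regret guarantee against the best benchmark in $\bigcap_t \widehat \cX_t$, (ii) the inclusion $\cX^\star \subseteq \widehat \cX_t$ for all $t$ given by \Cref{cor:includeStoc}, and (iii) the violation bound of \Cref{th:boundedV}. Event (i) fires with probability at least $1-\delta_1$, event (ii) with probability at least $1-5mKT\delta_2$, and event (iii) with probability at least $1-2mT^2\delta_2$.

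For the violation bound, I would directly plug the choice $\delta_2 = \epsilon/(14 m K T^2)$ into the conclusion of \Cref{th:boundedV}, which yields $V_t \le 53\sqrt{Kt\log(2/\delta_2)} = 53\sqrt{Kt\log(28 m K T^2/\epsilon)}$ uniformly in $t$, on event (iii).

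For the regret bound, let $x^\star \in \argmax_{x \in \cX^\star} \sum_{t\in\range{T}} \langle x, f_t\rangle$, so that $\OPTS = \sum_t \langle x^\star, f_t\rangle$. On event (ii), $x^\star \in \bigcap_t \widehat \cX_t$, so it is a valid comparator for \Cref{th:movingsets}. Substituting $\delta_1 = \epsilon/2$ into the theorem then gives, on event (i),
\[
R_T = \sum_{t \in \range{T}} \langle x^\star, f_t\rangle - f_t(a_t) \le 4\sqrt{KT\log(K/\delta_1)} = 4\sqrt{KT\log(2K/\epsilon)}.
\]

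The remaining step is a routine union bound. With the chosen $\delta_1, \delta_2$, the failure probabilities add up to at most $\delta_1 + 5mKT\delta_2 + 2mT^2\delta_2 = \tfrac{\epsilon}{2} + \tfrac{5\epsilon}{14 T} + \tfrac{\epsilon}{7 K} \le \epsilon$, where the last inequality uses $T, K \ge 1$ and $\tfrac{5}{14} + \tfrac{1}{7} = \tfrac{1}{2}$. Thus all three events hold simultaneously with probability at least $1-\epsilon$, yielding both claimed bounds. The only place where one has to be even slightly careful is in checking that the numerical constants in $\delta_2(\epsilon)$ are tight enough to accommodate the three failure probabilities jointly, but this is pure arithmetic rather than a genuine obstacle: all the algorithmic content has already been done by \Cref{th:movingsets}, \Cref{cor:includeStoc}, and \Cref{th:boundedV}.
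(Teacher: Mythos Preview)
Your proposal is correct and follows essentially the same approach as the paper: combine \Cref{th:movingsets}, \Cref{cor:includeStoc}, and \Cref{th:boundedV} via a union bound, then substitute $\delta_1=\epsilon/2$ and $\delta_2=\epsilon/(14mKT^2)$. Your arithmetic on the union bound is in fact slightly sharper than the paper's (which simply upper-bounds $5mKT\delta_2+2mT^2\delta_2$ by $7mKT^2\delta_2$), but the structure and the ingredients are identical.
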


Now, we turn to the adversarial setting. \Cref{th:boundedV} guarantee $\widetilde O(\sqrt{T})$ violations even with adversarial constraint,  while the regret guarantees follows by combining \Cref{lem:RM_in_adv} and \Cref{th:movingsets} 

\begin{restatable}{theorem}{finaladv}  
    in the adversarial setting, for any $\epsilon>0$ \Cref{alg:mainalgo} guarantees that with probability at least $1-\epsilon$:
    \[
    \alpha\text{-}R_T\le  4\sqrt{KT\log(2K/\epsilon)}\quad\text{and}\quad V_t\le 53\sqrt{Kt\log(28mKT^2/\epsilon)} \quad \forall t \in \range{T},
    \]
    where $\alpha=\frac{\rho}{1+\rho}$.
\end{restatable}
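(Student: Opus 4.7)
The plan is to assemble three pieces already proved in the paper: Theorem \ref{th:boundedV} for the violation side, Theorem \ref{th:movingsets} for the regret of the moving-set minimizer, and Lemma \ref{lem:RM_in_adv} to exhibit a strong benchmark point inside $\bigcap_{t} \widehat\cX_t$. No new analysis is required; the only genuine step is the interpolation argument that converts ``no-regret against $\cX^\star_\varnothing$'' into ``$\alpha$-regret against $\OPTA$''.

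First I would handle the violation bound. It is immediate from Theorem \ref{th:boundedV}, which is stated for both the stochastic and the adversarial setting. Instantiating it with $\delta_2 = \epsilon/(14mKT^2)$ yields $V_t \le 53\sqrt{Kt\log(28mKT^2/\epsilon)}$ uniformly in $t\in\range{T}$ with probability at least $1 - 2mT^2\delta_2 = 1 - \epsilon/(7K) \ge 1 - \epsilon/2$.

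Next I would derive the $\alpha$-regret. Apply Theorem \ref{th:movingsets} with $\delta_1 = \epsilon/2$: on an event of probability $\ge 1-\epsilon/2$, for every $x \in \bigcap_{t\in\range{T}} \widehat\cX_t$,
\[
\sum_{t\in\range{T}} \langle f_t, x\rangle - f_t(a_t) \;\le\; 4\sqrt{KT\log(2K/\epsilon)}.
\]
By Lemma \ref{lem:RM_in_adv} we have $\cX^\star_\varnothing \subseteq \widehat\cX_t$ for all $t$, hence $\cX^\star_\varnothing \subseteq \bigcap_{t\in\range{T}} \widehat\cX_t$. Let $x^\star \in \argmax_{x\in\Delta_K}\sum_{t\in\range{T}} \langle x, f_t\rangle$, so $\sum_t \langle x^\star, f_t\rangle = \OPTA$. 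By definition \eqref{eq:adv_set},
\[
\tilde x \;\defeq\; \tfrac{1}{1+\rho}\, x^\varnothing + \tfrac{\rho}{1+\rho}\, x^\star \;\in\; \cX^\star_\varnothing \;\subseteq\; \bigcap_{t\in\range{T}} \widehat\cX_t.
\]
Since $f_t(a) \in [0,1]$, the contribution of $x^\varnothing$ to $\langle \tilde x, f_t\rangle$ is non-negative, giving $\langle \tilde x, f_t\rangle \ge \tfrac{\rho}{1+\rho}\langle x^\star, f_t\rangle$; summing over $t$ yields $\sum_t \langle \tilde x, f_t\rangle \ge \alpha \cdot \OPTA$ with $\alpha = \rho/(1+\rho)$. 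Plugging $x = \tilde x$ into the moving-set regret inequality gives
\[
\alpha\, \OPTA - \sum_{t\in\range{T}} f_t(a_t) \;\le\; \sum_{t\in\range{T}} \langle \tilde x, f_t\rangle - f_t(a_t) \;\le\; 4\sqrt{KT\log(2K/\epsilon)}.
\]

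Finally I would close with a union bound. The regret event (probability $\ge 1-\epsilon/2$) and the violation event (probability $\ge 1-\epsilon/2$) together hold with probability at least $1-\epsilon$, producing exactly the two stated inequalities. There is no serious obstacle here; the only subtle point is making the interpolation step transparent, namely that $\rho/(1+\rho)$ is precisely the mixing weight that simultaneously (i) lands the convex combination of $x^\varnothing$ and an arbitrary $x^\star$ inside the strictly-feasible scaled set $\cX^\star_\varnothing$ guaranteed by Lemma \ref{lem:RM_in_adv}, and (ii) loses only a factor $\alpha = \rho/(1+\rho)$ in reward relative to $x^\star$ by non-negativity of $f_t$.
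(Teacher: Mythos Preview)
Your proposal is correct and follows essentially the same route as the paper: invoke Theorem~\ref{th:boundedV} for the violation bound, combine Lemma~\ref{lem:RM_in_adv} with Theorem~\ref{th:movingsets}, and then use the convex combination $\tfrac{1}{1+\rho}x^\varnothing+\tfrac{\rho}{1+\rho}x^\star\in\cX^\star_\varnothing$ together with $f_t\ge 0$ to turn the moving-set regret into the $\tfrac{\rho}{1+\rho}$-regret, finishing with the same choice $\delta_1=\epsilon/2$, $\delta_2=\epsilon/(14mKT^2)$ and a union bound.
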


Note that in both settings, the regret upper bound is of order $\widetilde O(\sqrt{KT})$ and it is independent from the number of constraints $m$, while the violations are of order $\widetilde O(\sqrt{KT\log(m)})$ and depend only logarithmically on $m$.
This is in contrast to the other best-of-both-world algorithms for bandits with long term constraints, based on primal-dual methods, in which both the regret and the violations depends polynomially in $m$. 

Another interesting characteristic of our methodology is that we guarantee an anytime bound on the constraint violation. Indeed, this matches the guarantees provided by the most recent primal-dual methods \citep{bernasconi2024noregret, aggarwal2024noregret} that, however, require weakly-adaptive underlying regret minimizers.

\subsection{Convergence rate in the stochastic setting}

To conclude, we point to a nice byproduct of our analysis. In the stochastic setting, we can easily prove a sort of ``convergence rate'' of $x_t$ to the set $\cX^\star$. Formally, we can prove that \emph{positive violations} are bounded by $\widetilde  O(\sqrt{Kt\log m})$ as long as we consider expected violations. Let us define $x^+\defeq \max(x,0)$ and 
\[
\cV_t^+ \defeq \max_{i\in\range{m}}\sum_{\tau=1}^t \left[\langle x_\tau,\bar g^{(i)}_\tau\rangle\right]^+.
\]
Then, we can state the following theorem:

\begin{restatable}{theorem}{theorempositiveviolations}
    \Cref{alg:mainalgo}, in the stochastic setting, guarantees that with probability at least $1-\epsilon$, it holds that:
    \[
    \cV^+_t\le 16\sqrt{Kt\log(28mKT^2/\epsilon)} \quad \forall t\in\range{T}.
    \]
\end{restatable}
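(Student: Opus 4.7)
The plan is to leverage the UCB property of the moving feasible sets $\widehat \cX_\tau$ together with the stochastic concentration guarantee already established. The starting observation is that, under the clean event $\cC$ from Corollary~\ref{cor:includeStoc}, the strategy $x_\tau$ produced by \Cref{alg:mainalgo} lies in $\widehat \cX_\tau^{(i)}$ for every $i$, so by the definition of $\widehat \cX_\tau^{(i)}$ we have $\langle x_\tau, \hat g_\tau^{(i)}\rangle \le \langle x_\tau, b_\tau\rangle$. Simultaneously, Lemma~\ref{lem:lemmaCleanEvent} gives the pointwise concentration $|\hat g_\tau^{(i)}(a) - \bar g^{(i)}(a)| \le b_\tau(a)$ for every $\tau, a, i$. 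Combining the two facts and using that $x_\tau$ is a probability distribution yields
\[
\langle x_\tau, \bar g^{(i)}\rangle \;=\; \langle x_\tau, \hat g_\tau^{(i)}\rangle + \langle x_\tau, \bar g^{(i)} - \hat g_\tau^{(i)}\rangle \;\le\; 2\langle x_\tau, b_\tau\rangle.
\]
Since the bonuses are nonnegative, the right-hand side is too, so $[\langle x_\tau, \bar g^{(i)}\rangle]^+ \le 2\langle x_\tau, b_\tau\rangle$. This upper bound is uniform in $i$, hence taking the maximum and summing gives $\cV_t^+ \le 2\sum_{\tau=1}^t \langle x_\tau, b_\tau\rangle$.

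Next I would invoke Lemma~\ref{lem:b_t} to bound the bonus sum. The bonus $b_\tau(a) = \sqrt{2\log(2/\delta_2)/n_{\tau-1}(a)}$ has exactly the form $c/n_\tau(a)^{1/2}$ (modulo an index shift that contributes only a constant factor), so applying the lemma with $c = \sqrt{2\log(2/\delta_2)}$ and $\alpha = 1/2$ yields $\sum_{\tau=1}^t \langle x_\tau, b_\tau\rangle = \widetilde O(\sqrt{Kt\log(1/\delta_2)})$ with probability at least $1 - \delta_2$.

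Finally, a union bound across the clean event (failure probability $5mKT\delta_2$), the bonus-sum concentration (failure probability $\delta_2$), and the anytime requirement over $t \in \range{T}$ --- all absorbed into the parameter choice $\delta_2 = \epsilon/(14mKT^2)$ from \Cref{sec:everything_together} --- delivers the theorem. The main (routine) obstacle is the bookkeeping of multiplicative constants so that the clean numerical bound $16\sqrt{Kt\log(28mKT^2/\epsilon)}$ actually comes out; the conceptual content is simply the observation that UCB-style optimism controls not only the realized cumulative violations $V_t$ handled by \Cref{th:boundedV}, but also the running sum of \emph{expected} positive violations induced by $x_\tau$.
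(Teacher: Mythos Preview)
Your proposal is correct and follows essentially the same approach as the paper: decompose $\langle x_\tau,\bar g^{(i)}\rangle$ into $\langle x_\tau,\hat g_\tau^{(i)}\rangle+\langle x_\tau,\bar g^{(i)}-\hat g_\tau^{(i)}\rangle$, bound the first term via $x_\tau\in\widehat\cX_\tau$ and the second via \Cref{lem:lemmaCleanEvent}, then apply \Cref{lem:b_t} with a union bound over $t$ and plug in $\delta_2=\epsilon/(14mKT^2)$. Your derivation is in fact slightly more streamlined than the paper's (you skip the intermediate $[\cdot]^+$ splitting), but the argument is the same.
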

Intuitively, our result shows that our algorithm plays only a sublinear number of times ``far'' from the set $\cX^\star$, or that our algorithm plays a linear number of times ``close'' to the set $\cX^\star$. This is a much stronger result then just guaranteeing that $V_T$ is sublinear, as in that case it might be a linear number of times the algorithm plays ``far'' from $\cX^\star$ as long as it plays strictly inside of $\cX^\star$ often enough.
%\mat{non mi convince. Always play sembra che non giochi mai lontano (contraddicendo la prima parte). Toglierei da or in poi }

\section*{Acknowledgments}
MB, MC, AC, FF are partially supported by the FAIR (Future Artificial Intelligence Research) project PE0000013, funded by the NextGenerationEU program within the PNRR-PE-AI scheme (M4C2, investment 1.3, line on Artificial Intelligence). FF is also partially supported by ERC Advanced Grant 788893 AMDROMA “Algorithmic and Mechanism Design Research in Online Markets”, and PNRR MUR project IR0000013-SoBigData.it. MC is also partially supported by the EU Horizon project ELIAS (European Lighthouse of AI for Sustainability, No. 101120237). AC is partially supported by MUR - PRIN 2022 project 2022R45NBB funded by the NextGenerationEU program.

\bibliographystyle{plainnat}
\bibliography{refs}

\clearpage

\appendix
\section{Further Related Works}
\label{app:related}

\xhdr{Best-of-Both-Worlds.} A long line of work has investigated Best-of-Both-Worlds algorithms for bandits without constraints. These algorithms aim to achieve an instance-dependent logarithmic regret bound in stochastic environments, while also ensuring the worst-case $\Theta(\sqrt{T})$ regret bound that characterizes the adversarial settings \citep{BubeckS12, AuerC16, SeldinS14, SeldinL17, WeiL18, ZimmertS21}. Although our focus is on the generation model of the \emph{constraints}, our motivation in this paper is affine: retaining the best of the stochastic (sublinear regret with respect to the optimal dynamic policy) and adversarial world (tight competitive ratio with respect to the adversarial benchamrk). Furthermore, our idea of setting an adaptive learning rate that forces the learning algorithm to interpolate between an adversarial and a stochastic routine is reminescent of some of the techniques adopted in, e.g., \citet{BubeckS12}.

\xhdr{Bandits with Knapsacks.} The (stochastic) \bwk problem, where the rewards $f_t$ as well as the $g_t^i$ are drawn i.i.d. from a non-negative distribution (so that the budget available for each resource can only decrease over time) is formally introduced and solved in \citet{badanidiyuru2013bandits} (see also its journal version \citep{Badanidiyuru2018jacm}). \citet{agrawal2014bandits} studies a more general stochastic setting, which subsumes knapsack and exhibit optimal guarantees via \emph{optimism in the face of uncertainty} \citep[see also][]{agrawal2019bandits}. Moving to the adversarial \bwk problem (which corresponds to our model when the $g_t^i$ are all non-negative), an optimal solution is proposed in \citet{immorlica2019adversarial} \citep[see also][]{immorlica2022jacm}; there, the authors propose the \lagrangebwk framework, which has a natural interpretation: arms can be thought of as primal variables, and resources as dual variables. The framework works by setting up a repeated two-player zero-sum game between a primal and a dual player, and by showing convergence to a Nash equilibrium of the expected Lagrangian game. Differently from the stochastic version, the adversarial \bwk does not admit no-regret algorithms, but $\Theta(\log T)$ competitive ratio. In a subsequent work, \cite{kesselheim2020online} provides a new analysis obtaining a $O(\log m\log T)$ competitive ratio, which is optimal both in the time horizon $T$ and in the number of resources $m$ (and improves on the $O(m \log T)$ of \citet{immorlica2019adversarial,immorlica2022jacm}). In the special case in which budgets are $\Omega(T)$, \citet{castiglioni2022online} further improves the competitive ratio to $\nicefrac 1\rho$ where $\rho$ is the per-iteration budget. 

\xhdr{More general constraints.} \citet{castiglioni2022online} studies a setting with general constraints, and show how to adapt the \lagrangebwk framework to obtain best-of-both-worlds guarantees when Slater's parameter is known a priori. Similar guarantees are also provided, in the stochastic setting, by \citet{slivkins2023contextual}, which then extend the results to the contextual model. Finally, \citet{castiglioni2023online} introduces the use of weakly adaptive regret minimizers within the \lagrangebwk framework, and provides guarantees in the specific case of one budget constraint and one return-on-investments constraint.

\xhdr{Other related works.} 
In an effort to bridge the results for adversarial and stochastic \bwk, \citet{fikioris2023approximately} investigates a data generation model that interpolate between the fully stochastic and the fully adversarial setting, depending on the magnitude of fluctuations in expected rewards and resources consumption across rounds. A similar effort is undertaken in \citet{liu2022non}, that study a non-stationary setting and provide no-regret guarantees against the best dynamic policy through a UCB-based algorithm.
A recent line of work also investigates the natural situation where resources can be replenished in certain rounds (as also captured in our model) \citep{kumar2022non,bernasconi2023bandits,BernasconiCCF2024}.
Finally, a related line of works is the one on online allocation problems with fixed per-iteration budget, where the input pair of reward and costs is observed \emph{before} the learner makes a decision \citep{balseiro2022best,Balseiro2023}. 

\section{Proofs omitted from \Cref{sec:regretminimizer}}

\theoremmovingsets*

\begin{proof}
    Let us define the negative entropy for a vector $x\in\mathbb{R}_{\ge 0}^K$ as:
    \[
    \Psi(x)\defeq\sum_{a\in\range{K}}x(a)\left(\log(x(a))-1\right)
    \]
    and the Bregman divergence using $\Psi$ can be written as
    \[
    B(x||y):=\Psi(x)-\Psi(y)-\langle\nabla \Psi(y), x-y\rangle.
    \]

    For the Bregman divergence it holds the following:
    \begin{claim}[\cite{hazan2016introduction}]\label{claim:treepoint}
    For any $z_1,z_2$, and $z_3$, it holds:
        \[
        B(z_1||z_2)+B(z_2||z_3)-B(z_1||z_3) = \langle z_1-z_2, \nabla \Psi(z_3)-\nabla \Psi(z_2)\rangle.
        \]
        Moreover, given $z$, define $z'=\arg\min_{\bar z\in\cK} B(\bar z||z)$. Then:
        \[
            B(\tilde z||z')\le B(z'||z)+B(\tilde z|| z')\le B(\tilde z||z)\quad \forall \tilde z \in \mathcal{K}.
        \]
    \end{claim}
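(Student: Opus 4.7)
The plan is to prove the two parts separately, both relying on the definition $B(x\|y) = \Psi(x) - \Psi(y) - \langle \nabla\Psi(y), x-y\rangle$ and, for the second part, on the first-order optimality conditions for Bregman projections. Neither statement involves any of the algorithmic machinery of the paper; both are purely convex-analytic identities, so the proof should be short and essentially computational.

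For the three-point identity, I would just expand each of the three terms $B(z_1\|z_2)$, $B(z_2\|z_3)$, $B(z_1\|z_3)$ according to the definition and cancel. The $\Psi(z_1)$ and $\Psi(z_3)$ terms cancel immediately, the $\Psi(z_2)$ terms also cancel, so only the three inner-product corrections survive. Rearranging
\[
-\langle \nabla\Psi(z_2), z_1-z_2\rangle - \langle \nabla\Psi(z_3), z_2-z_3\rangle + \langle \nabla\Psi(z_3), z_1-z_3\rangle
\]
gives exactly $\langle z_1-z_2, \nabla\Psi(z_3)-\nabla\Psi(z_2)\rangle$. This is mechanical and the only ``trick'' is being careful about signs.

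For the projection inequality, the key step is to observe that since $z' = \arg\min_{\bar z\in\cK} B(\bar z\|z)$ is the minimizer of a convex function on the convex set $\cK$, the first-order optimality condition (valid because $\nabla_{\bar z} B(\bar z\|z) = \nabla\Psi(\bar z) - \nabla\Psi(z)$) yields
\[
\langle \nabla\Psi(z') - \nabla\Psi(z), \tilde z - z'\rangle \ge 0 \qquad \forall \tilde z\in\cK.
\]
Now I apply the three-point identity with $z_1 = \tilde z$, $z_2 = z'$, $z_3 = z$, which gives
\[
B(\tilde z\|z') + B(z'\|z) - B(\tilde z\|z) = \langle \tilde z - z', \nabla\Psi(z)-\nabla\Psi(z')\rangle \le 0,
\]
where the last inequality is exactly the optimality condition rewritten. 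This proves the right-hand inequality $B(z'\|z) + B(\tilde z\|z') \le B(\tilde z\|z)$. The left-hand inequality $B(\tilde z\|z') \le B(z'\|z) + B(\tilde z\|z')$ is immediate from non-negativity of $B(z'\|z)$, which follows from convexity of $\Psi$ (the negative entropy is strictly convex on the positive orthant).

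There is no real obstacle: the only subtlety is justifying that the first-order optimality condition for Bregman projection holds with $\cK$ possibly having a boundary—this is standard because $\Psi$ (negative entropy) has a gradient that blows up at the boundary of the positive orthant, so the minimizer is automatically in the relative interior and the variational inequality $\langle \nabla_{\bar z}B(\bar z\|z)|_{\bar z=z'}, \tilde z - z'\rangle \ge 0$ for $\tilde z\in\cK$ is valid. Since the claim is cited from \cite{hazan2016introduction}, I will only sketch these standard steps.
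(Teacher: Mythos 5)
Your proof is correct, and since the paper itself does not prove this claim (it is cited directly from \cite{hazan2016introduction}), there is nothing to diverge from: your argument --- expanding the definition of $B(\cdot\|\cdot)$ and cancelling for the three-point identity, then combining it with the first-order optimality condition $\langle \nabla\Psi(z')-\nabla\Psi(z),\tilde z - z'\rangle \ge 0$ and non-negativity of $B$ for the projection inequalities --- is exactly the standard textbook proof the citation points to. Your closing remark on why the variational inequality is legitimate despite the blow-up of $\nabla\Psi$ at the boundary of the positive orthant is the right subtlety to flag, and the sketch you give (the minimizer cannot sit on a boundary face that $\cK$ leaves, since the directional derivative there is $-\infty$) is adequate given that the claim is cited rather than proved in the paper.
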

    At this point, is more convenient to work with losses rather then rewards. Define $\ell_t(a)\defeq 1-f_t(a)$ and $\hat \ell_t(a)\defeq 1-\hat f_t(a)$.
    Note that:
    \[
    \hat \ell_t(a)=1-\hat f_t(a)=\begin{cases}
        0&\text{if } a\neq a_t\\
        \frac{1-f_t(a)}{x_t(a)+\gamma}&\text{if } a=a_t.
    \end{cases}
    \]
    Then, it is easy to verify that $\nabla \Psi(x)=\log(x)$ in which $\log(x)$ has to be interpreted to be applied entry-wise. Simple calculations also show that $\beta \hat \ell_t=\log(x_t)-\log(\hat x_{t+1})$. Thus, we can apply \Cref{claim:treepoint} with $z_1=x, z_2=x_t$ and $z_3=\hat x_{t+1}$ and this gives us the following:
    \begin{equation}
    \label{eq:auxiliary_bregman}
        \beta\langle x_t-x,\hat \ell_t\rangle = B(x||x_t)+B(x_t||\hat x_{t+1})-B(x||\hat x_{t+1}).
    \end{equation}

    Moreover using the second part of \Cref{claim:treepoint} in which $z=\hat x$, $z'=x_t$, $\tilde z=x$, and $\mathcal{K}=\widehat \cX_t$, we can conclude that $B(x||x_t)\le B(x||\hat x_t)$.
    Notice that here we use $x\in \widehat \cX_t$ for each $t$.
    Then, we have the following chain of inequalities:
    \begin{align*}
        \beta \sum_{t\in\range{T}} \langle x_t-x, \hat \ell_t\rangle &= \sum_{t\in\range{T}} B(x||x_t)+B(x_t||\hat x_{t+1})-B(x||\hat x_{t+1})\tag{By \Cref{eq:auxiliary_bregman}}\\
        &= B(x||x_1) - B(x||\hat x_{T+1}) + \sum_{t=2}^{T-1} \left(B(x||x_t) - B(x||\hat x_{t})\right)+\sum_{t\in\range{T}}B(x_t||\hat x_{t+1})\\
       & \le B(x||x_1) +\sum_{t\in\range{T}} B(x_t||\hat x_{t+1})
       \tag{$B$ is non-negative and $B(\cdot||x_t)\hspace{-0.1em}\le\hspace{-0.1em} B(\cdot||\hat x_t)$}
       \\
       &= B(x||x_1) +\sum_{t\in\range{T-1}} B(x_t||\hat x_{t+1})
    \end{align*}
    Combining the two we can find that:
    \begin{align}
       \beta \sum_{t\in\range{T}} \langle x_t-x, \hat \ell_t\rangle & \le \sum_{t\in\range{T}} [B(x||\hat x_t) + B(x_t||\hat x_{t+1}) - B(x||\hat x_{t+1})]\\
       & \le B(x||x_1) +\sum_{t\in\range{T}} B(x_t||\hat x_{t+1})
       % \tag{By a telescopic argument}
    \end{align}

    Now we analyze the term $B(x_t||\hat x_{t+1})$. 
    \begin{align*}
        B(x_t||\hat x_{t+1})&\le B(x_t||\hat x_{t+1})+B(\hat x_{t+1}|| x_t) \\
        &= \langle x_t-\hat x_{t+1}, \nabla \Psi(x_t)-\nabla \Psi(\hat x_{t+1})\rangle\tag{Definition of $B(\cdot||\cdot)$}\\
        &= \beta \langle x_t-\hat x_{t+1}, \hat \ell_{t}\rangle\tag{$\nabla \Psi(x)=\log(x)$ and $\beta \hat \ell_t = \log(x_t)-\log(\hat x_{t+1})$}\\
        & =\beta\sum_{a\in\range{K}} x_t(a)(1-e^{-\beta \hat \ell_t(a)}) \hat \ell_t(a)\\
        &\le \beta^2\sum_{a\in\range{K}} x_t(a)\hat \ell_t(a)^2 \tag{$1-e^{-x}\le x$}\\
        &\le \beta^2 \sum_{a\in\range{K}} \frac{1-f_t(a)}{x_t(a)+\gamma} x_t(a)\hat \ell_t(a) \\
        &\le \beta^2 \sum_{a\in\range{K}} \hat \ell_t(a),
    \end{align*}
    {where, in the last inequality, we use that $\nicefrac{x_t(a)}{(x_t(a) + \gamma)}$ is at most $1$.}
    Thus, by choosing $x_1(a)=1/K$ for all $a$, we have that $B(x||x_1)\le\log(K)$ 
    %\ma{check} \fe{It is correct. Calculations below:}
    % \begin{align*}
    %     B(x||x_1) &= \sum_{a \in \range{K}} \left(x(a) \log x(a) + x(a) \log K - x(a) + \tfrac 1K\right)\\
    %     &= \log K + \sum_{a \in \range{K}} x(a) \log x(a) \le \log K.
    % \end{align*}}
    and thus:
    \begin{align}\label{eq:regretMS}
    \sum\limits_{t\in\range{T}}\langle x_t-x, \hat \ell_t\rangle \le \frac{\log(K)}{\beta}+\beta\sum_{t\in\range{T}}\sum_{a\in\range{K}} \hat \ell_t(a)
    \end{align}
    Form \cite[Corollary~$1$]{neu2015explore} we know that with probability at least $1-\delta_1$ we have:
    \begin{align}\label{eq:regretMS2}
    \sum\limits_{t\in\range{T}} \hat \ell_t(a)-(1-f_t(a)) \le \frac{\log(K/\delta_1)}{2\gamma}\quad \forall a\in\range{K}.
    \end{align}
    Moreover, it is easy to verify that:
    \begin{align}
    1-f_t(a_t) &= \sum_{a\in\range{K}}\ind{a_t=a} (1-f_t(a))\frac{x_t(a)+\gamma}{x_t(a)+\gamma}\nonumber\\
    &=\sum_{a\in\range{K}} \hat \ell_t(a) x_t(a) + \gamma \sum_{a\in\range{K}} \frac{\ell_t(a)\ind{a_t=a}}{x_t(a)+\gamma}\nonumber\\
    &=\langle x_t, \hat \ell_t\rangle + \gamma \sum_{a\in\range{K}} \hat \ell_t(a)\label{eq:regretMS3}
    \end{align}
    The regret is with probability at least $1-\delta_1$:
    \begin{align*}
        \sum_{t\in\range{T}}& [\langle x, f_t\rangle-f_t(a_t)] \\
        &= \sum_{t\in\range{T}} [(1-f_t(a_t))-(1-\langle x, f_t\rangle)] \\
        &=\sum_{t\in\range{T}} [(1-f_t(a_t))-\langle x, \hat \ell_t\rangle]+\sum_{t\in\range{T}}[\langle x, \hat \ell_t\rangle-(1-\langle x, f_t\rangle)] \\
        &\le \sum_{t\in\range{T}} \langle x_t-x,\hat \ell_t\rangle +\sum_{t\in\range{T}}[\langle x, \hat \ell_t\rangle-(1-\langle x, f_t\rangle)] +\gamma\sum_{t\in\range{T}}\sum\limits_{a\in\range{K}}\hat \ell_t(a)\tag{\Cref{eq:regretMS3}}\\
        &\le \frac{\log(K)}{\beta}+\frac{\log(K/\delta_1)}{2\gamma}+(\gamma+\beta)\sum_{t\in\range{T}}\sum_{a\in\range{K}} \hat \ell_t(a)\tag{\Cref{eq:regretMS} and \Cref{eq:regretMS2}}\\
        &\le \frac{\log(K)}{\beta}+\frac{\log(K/\delta_1)}{2\gamma}+(\gamma+\beta)\left[\sum_{t\in\range{T}}\sum_{a\in\range{K}} (1-f_t(a))+K \frac{\log(K(\delta_1)}{2\gamma}\right]\\
        %&\hspace{7cm}+(\gamma+\beta) K \frac{\log(K(\delta)}{2\gamma}\tag{\Cref{eq:regretMS2}}\\
        &\le \frac{\log(K)}{\beta}+\frac{\log(K/\delta_1)}{2\gamma}+(\gamma+\beta)KT+(\gamma+\beta) K \frac{\log(K(\delta_1)}{2\gamma}\\
        &= \frac{\log(K)}{\beta}+\frac{\log(K/\delta_1)}{\beta}+2\beta KT+2K {\log(K/\delta_1)}
    \end{align*}
    where in the last inequality we used that $\beta=2\gamma$. By taking $\beta=\sqrt{\frac{\log(K/\delta_1)}{KT}}$ we obtain, that with probability at least $1-\delta_1$:
    \[
    \sum_{t\in\range{T}} [\langle x, f_t\rangle-f_t(a_t)]\le 4\sqrt{KT\log(K/\delta_1)},
    \]
    as desired.
\end{proof}
% \fe{Everything seems to work out. Some general comments: 
% \begin{itemize}
%     \item Why don't we set $\beta = \gamma$ and use only one letter? This does not seem to affect anything but simplifies the notation.\ma{approved}
%     \item Instead of going through losses (both in the algorithm and from EXP.SIX) why can't we just use $f_t$ and $\hat f_t$? This would not affect much, right? And it would avoid a lot of confusion!\ma{todo: fare questo}\ma{NON SI RIESCE}
%     \item Let's avoid the asymptotic notation if it is not needed. \ma{:(}
% \end{itemize} }

%\section{Omitted proofs of \Cref{sec:estimationSets}}\label{app:proofsEstimation}

\section{Proofs omitted from \Cref{sec:otpimism}: How to set the optimistic bonus}

\lemmaoptimism*
\begin{proof}
    In the following, we assume that the condition in the statement of the theorem holds. Hence, our result with hold with probability $1-\delta$ as promised.
    Let $x\in\cX_i^\star$. Consider a $t \in \range{T}$ and an $i \in \range{m}$. Then, consider the following inequalities:
    \begin{align*}
        \langle x, \hat g_t^{(i)}\rangle&=\langle x, \hat g_t^{(i)}- \bar g^{(i)}\rangle+\langle x, \bar g^{(i)}\rangle \\
        &\le\langle x, \hat g_t^{(i)}- \bar g^{(i)}\rangle\tag{$x\in\cX_i^\star$}\\
        &=\sum_{a\in\range{K}} x(a)(\hat g_t^{(i)}(a)- \bar g^{(i)}(a))\\
        &\le\langle x, b_t\rangle.
    \end{align*}
    Thus, $\langle x, \hat g_t^{(i)}-b_t\rangle\le 0$ which, by definition, proves that $x\in\widehat \cX_t^{(i)}$.
    This concludes the proof.
\end{proof}

\lemmaoptimismadv*
\begin{proof}
    In the adversarial setting, by \Cref{def:rho} we have that
    \[
    g_t^{(i)}(a^\varnothing)\le-\rho,
    \]
    for all $t\in\range{T}$ and constraint $i\in\range{m}$.
    Moreover, for each $t \in \range{T}$, $i \in \range{m}$, and $a \in \range{K}$, it holds 
    \[\hat g_t^{(i)}(a)=\sum_{\tau\in \cT_{t-1,a}} w_{t,a}^{(i)}(\tau)\ g_\tau^{(i)}(a)\] and $\sum_{\tau\in \cT_{t-1,a}} w_{t,a}^{(i)}(\tau)=1$. Then, for all $t\in\range{T}$ and constraint $i\in\range{m}$, $\hat g_t^{(i)}(a^\varnothing)\le-\rho$ and $\hat g_t^{(i)}(a)\le 1$ for each $a \neq a^\varnothing$.~\footnote{Notice that these inequalities hold only for action played at least one time. Otherwise, similar inequalities continue to be true thanks to the optimistic bonus $b_t$.}
    Thus, we can consider the following inequalities for any $\tilde x\in\cX_\varnothing^\star$:
    \begin{align*}
        \langle \tilde x, \hat g_t^{(i)}\rangle &= \frac{1}{1+\rho} \hat g_t^{(i)}(a^\varnothing) + \frac{\rho}{1+\rho}\langle x, \hat g_t^{(i)}\rangle\\
        &\le\frac{1}{1+\rho} (-\rho) + \frac{\rho}{1+\rho}\\
        &\le 0,
    \end{align*}
    thus proving that $\tilde x\in \widehat\cX_t$.
\end{proof}

\section{Proofs omitted from \Cref{sec:weights}: How to set the weights}\label{app:proofsEstimation}
\lemmaOGD*
\begin{proof}
    The first part of the statement is trivial as it can be easily checked that:
    \[
    \hat y_{t}=\sum_{\tau=1}^{t-1}y_\tau\left(\eta_\tau\prod_{k=\tau+1}^{t-1}(1-\eta_k)\right).
    \]
    Then, we prove the second part of the lemma by induction on $t$. The base case holds trivially as $w_2(1)=\eta_1=1$. Moreover, assuming $\sum_{\tau=1}^{t-2} w_\tau^t=1$, it holds:
    \[
    \sum_{\tau=1}^{t-1} w_t(\tau)= \sum_{\tau=1}^{t-2} w_{t-1}(\tau) (1-\eta_{t-1}) + w_{t}(t-1)=(1-\eta_{t-1})+\eta_{t-1}=1,
    \]
    where in the second-to-last equality we use the inductive hypothesis.
    This concludes the proof.
\end{proof}

\meanestimator*
\begin{proof}
    Consider an $a \in \range{K}$, an $i \in \range{m}$, and a $t \in \range{T}$. Then, by applying \Cref{lem:OGD} to the set of rounds $\cT_{t-1,a}$ we have that:
    \[
    w_{t,a}^{(i)}(\tau) = \frac{1}{n_\tau(a)} \prod_{k\in \cT_{t-1,a}:k>\tau} \left(1-\frac{1}{n_k(a)}\right) \quad \forall \tau \in \cT_{t-1,a} .
    \]
    Now, we show that
    \begin{align*}
        \prod_{k\in \cT_{t-1,a}:k>\tau} \left(1-\frac{1}{n_k(a)}\right)& = \prod_{k\in \cT_{t-1,a}:k>\tau} \frac{n_k(a)-1}{n_k(a)}\\
        &=\prod_{j=n_\tau(a)+1}^{n_{t-1}(a)}\frac{j-1}{j}\\
        &=\frac{n_\tau(a)}{n_{t-1}(a)},
    \end{align*}
    and thus
    \(
    w_{t,a}^{(i)}(\tau) = \frac{1}{n_{t-1}(a)},
    \)
    as desired.
\end{proof}

\lemmaviolations*
\begin{proof}
First, applying  \Cref{lem:matringale}, we have that with probability $1-\delta$ it holds:
   \begin{align}\label{eq:martin}
        \sum_{\tau=t_1}^{t_2}\langle g_{\tau}^{(i)},x_\tau \rangle\ge  \sum_{\tau=t_1}^{t_2} g_{\tau}^{(i)}(a_\tau) -4\sqrt{(t_2-t_1)\log(1/\delta)} 
    \end{align}

Consider the following chain of inequalities:
    \begin{align*}
        V_{[t_1,t_2]}^{(i)}&=\sum_{\tau=t_1}^{t_2} g^{(i)}_\tau(a_\tau)\\
        &\le \sum_{\tau=t_1}^{t_2} g^{(i)}_\tau(a_\tau)- \sum_{\tau=t_1}^{t_2} \langle \hat g_\tau^{(i)}, x_\tau\rangle + \sum_{\tau=t_1}^{t_2} \langle  x_\tau,b_\tau\rangle \tag{$x_\tau\in\widehat \cX_\tau$}\\
        % &\le \sum_{\tau=1}^t \langle g^{(i)}_\tau-\hat g_\tau^{(i)}, x_\tau\rangle + \sum_{\tau=1}+ O(\sqrt{T})\\
        &\le \sum_{\tau=t_1}^{t_2}  \left(g^{(i)}_\tau(a_\tau)- \hat g_\tau^{(i)}(a_\tau) \right)+ \sum_{\tau=t_1}^{t_2} \langle  x_\tau,b_\tau\rangle + 4\sqrt{(t_2-t_1)\log(1/\delta)}\tag{\Cref{eq:martin}}\\
        &=\sum_{a \in \range{K}}  \sum_{\tau \in \cT_{t_2,a}\cap [t_1,t_2]} (g^{(i)}_\tau(a)- \hat g_\tau^{(i)}(a) )+ \sum_{\tau=t_1}^{t_2} \langle  x_\tau,b_\tau\rangle + 4\sqrt{(t_2-t_1)\log(1/\delta)}\\
        &= \sum_{a \in \range{K}}  \sum_{\tau \in \cT_{t_2,a}\cap [t_1,t_2]} \frac{\hat g_{\tau+1}^{(i)}(a)-\hat g_{\tau}^{(i)}(a)}{\eta^{(i)}_\tau(a)}+\sum_{\tau=t_1}^{t_2} \langle  x_\tau,b_\tau\rangle + 4\sqrt{(t_2-t_1)\log(1/\delta)},
    \end{align*}
    where the last equality follows by the definition of the update: 
    \[\hat g_{\tau+1}^{(i)}(a)=\left(1-\eta^{(i)}_{\tau}(a)\right)\hat g_{\tau}^{(i)}(a) + \eta^{(i)}_{\tau}(a)g^{(i)}_\tau(a)\quad \textnormal{for} \ a=a_\tau.\]
    This concludes the proof.
\end{proof}

\corollaryViolation*

\begin{proof}
    We assume that \Cref{lem:violations} holds, and hence our statement holds with probability $1-\delta$.
    Then, to prove the statement it is sufficient to show that
    \[\sum_{a\in\range{K}} \sum_{\tau \in \cT_{t_2,a}\cap [t_1,t_2]} \frac{1}{\eta^{(i)}_\tau(a)}\left(\hat g_{\tau+1}^{(i)}(a)-\hat g_{\tau}^{(i)}(a)\right)\le \sum_{a\in\range{K}} \sum_{\tau \in \cT_{t_2,a}\cap [t_1,t_2]}\frac{1}{\eta_{t_2}^{(i)}(a)}. \]
    Fix any $a\in \range{K}$, and let $k=\left|\cT_{t_2,a}\cap [t_1,t_2]\right|$  be the number of times action $a$ is played in the interval $[t_1,t_2]$. Moreover, let $\tau(j)$ be the rounds in which action $a$ is played the $j$-th time in the interval $[t_1,t_2]$. Then:
    \begin{align*}
         \sum_{\tau \in \cT_{t_2,a}\cap [t_1,t_2]} \hspace{-1.2cm}&\hspace{1.2cm} \frac{1}{\eta^{(i)}_\tau(a)}\left(\hat g_{\tau+1}^{(i)}(a)-\hat g_{\tau}^{(i)}(a)\right)\\
         &= \sum_{j \in \range{k-1}} \frac{1}{\eta^{(i)}_{\tau(j)}(a)}\left(\hat g_{\tau(j+1)}^{(i)}(a)-\hat g_{\tau(j)}^{(i)}(a)\right)+ \frac{1}{\eta^{(i)}_{\tau(k)}(a)}\left(\hat g_{\tau(k)+1}^{(i)}(a)-\hat g_{\tau(k)}^{(i)}(a)\right)\\
         & \le \sum_{j \in \range{k-1}} \left(\frac{1}{\eta^{(i)}_{\tau(j+1)}(a)}\hat g_{\tau(j+1)}^{(i)}(a)- \frac{1}{\eta^{(i)}_{\tau(j)}(a)}\hat g_{\tau(j)}^{(i)}(a)\right)+ \frac{1}{\eta^{(i)}_{\tau(k)}(a)}\left(\hat g_{\tau(k)+1}^{(i)}(a)-\hat g_{\tau(k)}^{(i)}(a)\right)\\
         &=  \frac{1}{\eta^{(i)}_{\tau(k)}(a)}\hat g_{\tau(k)+1}^{(i)}(a)- \frac{1}{\eta^{(i)}_{\tau(1)}(a)}\hat g_{\tau(1)}^{(i)}(a)\\
         &\le  \frac{2}{\eta^{(i)}_{\tau(k)}(a)}\\
         &= \frac{2}{\eta^{(i)}_{\ell(a,[t_1,t_2])}(a)}
    \end{align*}
    Summing over all the actions we obtain the desired inequality.
\end{proof}

\lemmabt*
\begin{proof}
    Consider the following inequalities: 
    \begin{align*}
        \sum_{\tau=1}^t b_\tau(a_\tau) &= c\sum_{a\in\range{K}}\sum_{\tau\in\range{t}} \frac{1}{n_\tau(a)^\alpha}\ind{a_\tau=a}\\
        & = c\sum_{a\in\range{K}}\sum_{k=1}^{n_t(a)} \frac{1}{k^\alpha}\\
        & \le \frac{c}{1-\alpha}\sum_{a\in\range{K}}  n_t(a)^{1-\alpha}\tag{$\sum_{k=1}^N k^{-\alpha}\le \int_{0}^N x^{-\alpha} dx$}\\
        & \le \frac{c}{1-\alpha} K^\alpha t^{1-\alpha}\tag{Jensen's inequality}
    \end{align*}
    The proof is concluded by using \Cref{lem:matringale}. 
\end{proof}

\section{Proofs omitted from \Cref{sec:adaptive}}
\boundedV*
\begin{proof}
% \ma{$2c_1<c_2$ and $c_0<c_1<c_2$}
% \ma{$c_0=32$, $c_1=32$ and $c_2=64$}
     We prove that given an $i\in \range{m}$, it holds:
    \[V_t^{(i)}\le 53\sqrt{Kt\log(2/\delta_2)} \quad \forall t \in \range{T}\]  with probability $1-2T^2\delta_2$. Then, an union bound over $i$ completes the proof.

    Given an $i\in\range{m}$, we first assume some high-probability events. In particular, we assume that \Cref{cor:violations} with $\delta=\delta_2$ holds for any interval, and that \Cref{lem:b_t} with $\delta=\delta_2$ holds for all $t\in \range{T}$. This happens with probability at least $1-2T^2\delta_2$.
    We consider two cases. If $V_t^{(i)}\le 53\sqrt{KT\log(1/\delta_2)}$ for all $t\in\range{T}$, then the statement it is trivially satisfied. Otherwise, there exists an  a time $\bar t$ for which $V_{\bar t}^{(i)}\ge 53\sqrt{Kt\log(1/\delta_2)}$. 
    Clearly, this implies that there exists a $\munderbar t< \bar t$ such that $V_t^{(i)}\ge 42\sqrt{Kt\log(1/\delta_2)}$ for all $t\in[\munderbar t, \bar t]$ and $V_{\munderbar t-1}^{(i)}\le 42\sqrt{Kt\log(1/\delta_2)}$.
    Since $V_t^{(i)}\ge 42\sqrt{Kt\log(1/\delta_2)}$ for all $t\in[\munderbar t, \bar t]$ we have that:
    \begin{align*}
    V_{t}^{(i)}-21\sqrt{Kt\log(1/\delta_2)}&\ge 42\sqrt{Kt\log(1/\delta_2)}-21\sqrt{Kt\log(1/\delta_2)}\ge 21\sqrt{Kt\log(1/\delta_2)}
    \end{align*}
    and thus $\Gamma_t^{(i)}=21\sqrt{Kt\log(1/\delta_2)}$ for all $t\in[\munderbar t, \bar t]$.
    Hence, on the interval $t\in[\munderbar t, \bar t]$ we known that the learning rate can be lower bounded by a non-increasing function of time as 
    \[
    \eta_t^{(i)}(a_t)=\frac{1+21\sqrt{Kt\log(1/\delta_2)}}{n_t(a_t)}\ge 21\sqrt{\frac{K\log(1/\delta_2)}{n_t(a_t)}}.
    \]

    This let us use \Cref{cor:violations} (that we assumed to hold) to show that:
    \begin{align*}
    V_{[\munderbar t, \bar t]}^{(i)}&\le
    \frac{2}{21\sqrt{K\log(1/\delta_2)}}\sum\limits_{a\in\range{K}}{\sqrt{n_{\bar t}(a)}}+\sum_{\tau=\munderbar t}^{\bar t} \langle  x_\tau,b_\tau\rangle +4\sqrt{t\log(1/\delta_2)}\\
    &\le 
    \frac{2\sqrt{K \bar t}}{21\sqrt{K\log(1/\delta_2)}}+\sum_{\tau=\munderbar t}^{\bar t} \langle  x_\tau,b_\tau\rangle +4\sqrt{t\log(1/\delta_2)}\tag{Jensen's inequality}\\
    %&64\sqrt{Kt\log(1/\delta)}+\frac{1}{32}\sqrt{\frac{t}{K\log(1/\delta)}}+ \sqrt{Kt\log(T/\delta)}+8\sqrt{t\log(1/\delta)}\tag{\Cref{cor:violations}}\\
    &\le\frac{2\sqrt{K \bar t}}{21\sqrt{K\log(1/\delta_2)}}+2\sqrt{2Kt\log(2/\delta_2)}+8\sqrt{t\log(1/\delta_2)}\tag{\Cref{lem:b_t}}\\
    &\le (\nicefrac{1}{10}+10)\sqrt{Kt\log(2/\delta_2)}.
    \end{align*}
    
    Now, $V_{\bar t}^{(i)}\le V_{\munderbar t} + V_{[\munderbar t, \bar t]}^{(i)}\le (42+\nicefrac{1}{10}+10)\sqrt{Kt\log(2/\delta)}<53\sqrt{Kt\log(2/\delta)}$.
    We thus reached a contradiction and there is no such a $\bar t$. The union bound on all $i\in\range{m}$ concludes the proof.
\end{proof}

\lemmaCleanEvent*
\begin{proof}
    First, we show some concentration inequalities that will be useful in the following.
    By an Hoeffding's inequality and an union bound with probability at least $1-mKT\delta_2 $, it holds:
    \begin{equation}\label{eq:mean}
    \left|\frac{1}{n_{t-1}(a)}\sum_{\tau\in \cT_{t-1,a}} g_{\tau}^{(i)}(a)-\bar g^{(i)}{a}\right|\le \sqrt{\frac{2\log(2/\delta_2)}{n_t(a)}} \quad \forall t \in \range{T}, k \in \range{K}, i \in \range{m}.
    \end{equation}
    
    Moreover, by Lemma~\Cref{lem:matringale} and an union bound,  with probability at least $1-mT\delta_2$, it holds:
    \begin{equation}\label{eq:mean2}
    V_{t}^{(i)}\le \sum_{\tau=1}^{t-1} \langle x_\tau, g_\tau^{(i)}\rangle + 4\sqrt{t\log(1/\delta_2)} \quad \forall t \in \range{T}, i \in \range{m}
    \end{equation}

    Similarly, by Lemma~\Cref{lem:matringale} and an union bound,  with probability at least $1-mT\delta_2$, it holds:
    \begin{equation}\label{eq:mean3}
    \sum_{\tau=1}^{t} \langle x_\tau, \bar g_\tau^{(i)}\rangle \le \sum_{\tau=1}^{t} \bar g^{(i)}(a_\tau)+  4\sqrt{t\log(1/\delta_2)} \quad \forall t \in \range{T}, i \in \range{m}
    \end{equation}

    By \Cref{lem:matringale2} and an union bound, with probability at least $1-mT\delta_2$
    \begin{equation}\label{eq:mean4}
    \sum_{\tau=1}^{t} \langle x_\tau,  g_\tau^{(i)}\rangle  \le \sum_{\tau=1}^{t} \langle x_\tau, \bar g^{(i)}\rangle +  4\sqrt{t\log(1/\delta_2)} \quad \forall t \in \range{T}, i \in \range{m}
    \end{equation}

    Finally, by \Cref{lem:b_t} and an union bound, with probability $1-T\delta_2$, it holds:
     \begin{equation}\label{eq:mean5}
    \sum_{\tau=1}^{t} \langle x_\tau,  b_\tau\rangle  \le 2\sqrt{2Kt\log(2/\delta_2)} +  4\sqrt{t\log(1/\delta_2)} \quad \forall t \in \range{T}
    \end{equation}

    In the following, we will assume the the previous events hold, and hence our result holds with probability at least $1-5mKT\delta_2$.
    % Then, we show that $V_t^i\le32\sqrt{Kt\log(mKT/\delta)}$ for each $t$ and $i$.

    First, we show that $V_t^i\le21\sqrt{Kt\log(2/\delta_2)}$ for each $t$ and $i$.
     Our proof works by induction on $t$. Clearly, the inequality holds for $t=1$. Now, assume that it holds for all $\tau\le t-1$. By the definition of $\Gamma_\tau^{(i)}$, the induction assumption implies that $\eta_\tau^{(i)}(a)=\frac{1}{n_\tau(a)}$ for all $a\in\range{K}$, $i\in\range{m}$ and $\tau\le t-1$. Then, thanks to \Cref{rem:meanestimator} we have that:
    \begin{equation}%\label{eq:mean5}
    \hat g_\tau^{(i)}(a)=\frac{1}{n_{\tau-1}(a)}\sum_{\hat t\in \cT_{\tau-1,a}} g_{\hat t}^{(i)}(a) \quad \forall \tau \le t-1.
    \end{equation}
    Hence, by \Cref{eq:mean}, it holds that:
    \[
        \left|\hat g_{\tau}^{(i)}(a) - \bar g^{(i)}(a)\right| \le \sqrt{\frac{2\log(2/\delta_2)}{n_\tau(a)}} \quad \forall \tau \le t-1.
    \]
    and thus that $\widehat \cX^{(i)}_\tau \neq\{\emptyset\}$ for all $\tau\le t-1$. 
    Assuming that the events above holds, consider now the following inequalities:
    \begin{align*}
        V^{(i)}_t&=V^{(i)}_{t-1}+g^{(i)}_t(a_t)\\
        & \le\sum_{\tau=1}^{t-1}\langle x_\tau, g_\tau^{(i)}\rangle+g^{(i)}_t(a_t)+4\sqrt{t\log(1/\delta_2)}\tag{\Cref{eq:mean2}}\\
        & \le\sum_{\tau=1}^{t-1}\langle x_\tau, g_\tau^{(i)}-\hat g_\tau^{(i)}\rangle+\sum_{\tau=1}^{t-1}\langle x_\tau,b_\tau\rangle+g^{(i)}_t(a_t)+4\sqrt{t\log(1/\delta_2)}\tag{$x_\tau\in\widehat \cX_\tau^{(i)}$}\\
        & \le\sum_{\tau=1}^{t-1}\langle x_\tau, g_\tau^{(i)}-\hat g_\tau^{(i)}\rangle+2\sqrt{2Kt\log(2/\delta_2)}+g^{(i)}_t(a_t)+8\sqrt{t\log(1/\delta_2)} \tag{Equation~\eqref{eq:mean5}}\\
        & \le\sum_{\tau=1}^{t-1}\langle x_\tau, g_\tau^{(i)}-\hat g_\tau^{(i)}\rangle+2\sqrt{2Kt\log(2/\delta_2)}+1+8\sqrt{t\log(1/\delta_2)} \tag{$g_t^{(i)}(a)\le 1$}\\
        & \le\sum_{\tau=1}^{t-1}\langle x_\tau, \bar g^{(i)}-\hat g_\tau^{(i)}\rangle+2\sqrt{2Kt\log(2/\delta_2)}+1+12\sqrt{t\log(1/\delta_2)} \tag{\Cref{eq:mean4}}\\
        & \le\sum_{\tau=1}^{t-1}(\bar g^{(i)}(a_\tau)-\hat g_\tau^{(i)}(a_\tau))+2\sqrt{2Kt\log(2/\delta_2)}+1+12\sqrt{t\log(1/\delta_2)} \tag{\Cref{eq:mean3}}\\
        &=\sum_{a\in\range{K}}\sum_{\tau=1}^{t-1}(\bar g^{(i)}(a)-\hat g_\tau^{(i)}(a))\ind{a_\tau=a}+2\sqrt{2Kt\log(2/\delta_2)}+1+12\sqrt{t\log(1/\delta_2)}\\
        &\le \sqrt{2\log(2/\delta_2)}\sum_{a\in\range{K}}\sum_{\tau=1}^{t-1} \frac{1}{\sqrt{n_\tau(a)}}\ind{a_\tau=a}+2\sqrt{2Kt\log(2/\delta_2)}+1+12\sqrt{t\log(1/\delta_2)}\\
        &\le 2\sqrt{2Kt\log(2/\delta_2)}+2\sqrt{2Kt\log(2/\delta_2)}+1+12\sqrt{t\log(1/\delta_2)}
    \end{align*}
    and thus $V^{(i)}_t\le 21\sqrt{Kt\log(2/\delta_2)}$.

    % Redefining $\delta$ accordingly proves that:
    % \[
    % V^{(i)}_t\le 32\sqrt{Kt\log(mKT/\delta)} \quad \forall i \in \range{m},
    % \]
    % as desired.
    Thus $\Gamma_t^{(i)}=0$ and $\hat g_t^{(i)}(a)$ is the empirical mean of past observations. 
    This concludes the induction step, showing that $V_t^i\le 21\sqrt{Kt\log(1/\delta_2)}$ for all $t \in \range{T}$, and $\Gamma^{(i)}_t=0$ for all $t \in \range{T}$ and $i \in \range{m}$.
    
    Now, we proved that with probability $1-3mKT\delta_2$, all $\Gamma^{(i)}_t=0$, and hence  by \Cref{eq:mean} we have that:
    \[
    \left|\hat g_{t}^{(i)}(a) - \bar g^{(i)}(a)\right| \le \sqrt{\frac{2\log(2/\delta_2)}{n_t(a)}}\quad \forall i \in \range{m}, t \in \range{T}, a \in \range{K}
    \]
    as desired.
\end{proof}

\section{Proofs omitted from \Cref{sec:everything_together}}
\finalstoch*
\begin{proof}
    To prove the upper bound on the regret, we simply have to combine \Cref{cor:includeStoc} with \Cref{th:movingsets}.
    By \Cref{cor:includeStoc} which  probability at least $1- 5mKT\delta$, it holds $\cX^\star\subseteq \cap_{t\in\range{T}}\widehat\cX_t$.
    Moreover, by \Cref{th:movingsets}, we have that for each $x\in \cX^\star$ with probability at least $1-\delta_1$:
    \[
    \sum_{t\in\range{T}} \langle f_t, x\rangle - f_t(a_t) \le 4\sqrt{KT\log(K/\delta_1)}.
    \]
    Let $x^\star=\arg \max_{x \in \cX^\star} \sum_{t \in \range{T}} \langle x,f_t\rangle$.
    Then, by union bound we have that with probability at least $1-5mKT\delta_2-\delta_1$ it holds:
    \[
    \sum_{t\in\range{T}} \langle f_t, x^\star\rangle - f_t(a_t) \le 4\sqrt{KT\log(K/\delta_1)},
    \]
    proving the bound on the regret.
    The bound on the violations holds with probability at least $1-2mT^2\delta_2$ by \Cref{th:boundedV}, and guarantees:
    \[
    V_t\le 53\sqrt{Kt\log(2/\delta_2)}.
    \]
    By an union bounds on all events, the guarantees hold with probability at least $1-7mKT^2\delta_2-\delta_1$. Thus by taking $\delta_1=\epsilon/2$ and $\delta_2=\epsilon/(14mKT^2)$ we obtain the desired result.
\end{proof}

\finaladv*
\begin{proof}

    Combining \Cref{lem:RM_in_adv} and \Cref{th:movingsets} readily proves that with probability at least $1-\delta_1$ we have that for all $\tilde x\in\cX_\varnothing^\star\subseteq \widehat \cX_t$, we have:
    \[
    \sum_{t\in\range{T}} \langle f_t, \tilde x\rangle - f_t(a_t) \le 4\sqrt{KT\log(K/\delta_1)}.
    \]

    Let $x^\star=\arg \max_{x\in\Delta_K} \sum_{t\in\range{T}}\langle x, f_t\rangle$.
    Then, observe that $\bar x=\frac{1}{1+\rho} x^\varnothing + \frac{\rho}{1+\rho} x^* \in \cX^\varnothing$, where $x^\varnothing(a^\varnothing)=1$ and $x^\varnothing(a)=0$ for each $a\neq a^\varnothing$. 
    Then, we have that:
    \[
    \sum_{t\in\range{T}} \langle \bar x, f_t\rangle = \sum_{t\in\range{T}} \left\langle \frac{1}{1+\rho}x^\varnothing+\frac{\rho}{1+\rho} x^\star, f_t\right\rangle \ge \frac{\rho}{1+\rho}\sum_{t\in\range{T}}\langle x^\star, f_t\rangle.
    \]
    since $f_t(a^\varnothing)\ge 0$. This proves that with probability at least $1-\delta_1$:
    \[
    \left(\frac{\rho}{1+\rho}\right)\text{-}R_T\le 4\sqrt{KT\log(K/\delta_1)}.
    \]
    Similarly to the proof of \Cref{th:finalstoch}, we can prove that the bound on the violations holds with probability at least $1-2mT^2\delta_2$ by \Cref{th:boundedV}, and give:
    \[
    V_t\le 53\sqrt{Kt\log(2/\delta_2)}.
    \]
    Overall these events hold with probability at least $1-2mT^2\delta_2-\delta_1$. By defining $\delta_1=\epsilon/2$ and $\delta_2=\epsilon/(14mKT^2)$ we have that the desired results hold with probability at least $1-\epsilon$.
\end{proof}

\theorempositiveviolations*
\begin{proof}
Define for each $i\in\range{m}$ and $t \in \range{T}$
    \[
    \cV_t^{i,+} \defeq \sum_{\tau=1}^t \left[\langle x_\tau, \bar g_\tau^{(i)}\rangle\right]^+.
    \]
    Then, given an $i$ and a $t$ consider the following chain of inequalities:
    \begin{align*}
        \cV_t^{i,+} &= \sum_{\tau=1}^t \left[\langle x_\tau, \bar g_\tau^{(i)}\rangle\right]^+\\
        &= \sum_{\tau=1}^t \left[\langle x_\tau, \bar g_\tau^{(i)}-\hat g_\tau^{(i)} + \hat g_\tau^{(i)}\rangle\right]^+\\
        & = \sum_{\tau=1}^t \left[\langle x_\tau, \bar g_\tau^{(i)}-\hat g_\tau^{(i)}\rangle + \langle x_\tau,\hat g_\tau^{(i)}\rangle\right]^+\\
        &\le \sum_{\tau=1}^t\left[\langle x_\tau, \bar g_\tau^{(i)}-\hat g_\tau^{(i)}\rangle + \langle x_\tau,b_\tau\rangle\right]^+\tag{$x_\tau\in\widehat \cX_\tau$}\\
        &\le \sum_{\tau=1}^t\left[\langle x_\tau, \bar g_\tau^{(i)}-\hat g_\tau^{(i)}\rangle\right]^+ + \left\langle x_\tau,b_\tau\rangle\right]^+\\
        &\le 2\sum_{\tau=1}^t\langle x_\tau,b_\tau\rangle^+\tag{\Cref{lem:lemmaCleanEvent}}
    \end{align*}
    where last inequality both hold with probability $1-5mKT\delta_2$ jointly for each $i$ and $t$.

    Since $b_t=\sqrt{\frac{2\log(2/\delta_2)}{n_{t-1}(a)}}$ we can apply \Cref{lem:b_t} and an union bound on all $t$ to find that with probability at least $1-T \delta_2-5mKT\delta_2$:
    \[
    \cV_t^{i,+}\le 4\sqrt{2Kt\log(2/\delta_2)}+8\sqrt{t\log(1/\delta_2)} \quad \forall i \in \range{m}, t \in \range{T}.
    \]
    Thus, we can conclude that:
    \[
    \cV_t^+\le 16\sqrt{Kt\log(2/\delta_2)} \quad \forall i \in \range{m}, t \in \range{T}
    \]
    with probability at least $1-6mKT\delta_2$. Recalling that $\delta_2=\epsilon/(14mKT^2)$ we obtain the result.
\end{proof}

\section{Further technical lemmas}
\begin{lemma}\label{lem:matringale}
For any sequence of function $r_t:\range{K}\to[-1,1]$ which is $t-1$ predictable and any sequence of randomized strategy $x_t\in\Delta_K$, it holds that with probability at least $1-\delta$:
    \[
    \left|\sum_{t\in\range{T}} \langle x_t, r_t\rangle - \sum_{t\in\range{T}} r_t(a_t)\right| \le 4\sqrt{T\log(1/\delta)}.
    \]
\end{lemma}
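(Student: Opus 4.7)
The plan is to recognize the quantity inside the absolute value as a bounded martingale and apply the Azuma--Hoeffding inequality in a standard way. Let $\mathcal{F}_{t-1}$ denote the natural filtration generated by all randomness up to (and including) round $t-1$, so that both $x_t$ and $r_t$ are $\mathcal{F}_{t-1}$-measurable by the hypothesis of $t-1$-predictability. Define
\[
D_t \defeq \langle x_t, r_t\rangle - r_t(a_t), \qquad t \in \range{T}.
\]
Since $a_t$ is drawn from $x_t$ conditionally on $\mathcal{F}_{t-1}$, we have
\[
\mathbb{E}\bigl[r_t(a_t)\,\big|\,\mathcal{F}_{t-1}\bigr] = \sum_{a\in\range{K}} x_t(a)\, r_t(a) = \langle x_t, r_t\rangle,
\]
so $\mathbb{E}[D_t\mid\mathcal{F}_{t-1}] = 0$, i.e.\ $\{D_t\}_{t\in\range{T}}$ is a martingale difference sequence adapted to $\{\mathcal{F}_t\}_t$.

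Next I would bound the increments. Since $r_t$ takes values in $[-1,1]$ and $x_t\in\Delta_K$, both $\langle x_t, r_t\rangle$ and $r_t(a_t)$ lie in $[-1,1]$, giving $|D_t|\le 2$ deterministically. Applying the one-sided Azuma--Hoeffding inequality to $\sum_t D_t$ with these bounded increments yields, for every $\lambda>0$,
\[
\Pr\!\left[\sum_{t\in\range{T}} D_t \ge \lambda\right] \le \exp\!\left(-\frac{\lambda^2}{8T}\right),
\]
and the analogous bound holds for $-\sum_t D_t$. A union bound over the two tails gives, with probability at least $1-\delta$,
\[
\left|\sum_{t\in\range{T}} D_t\right| \le \sqrt{8T\log(2/\delta)}\le 4\sqrt{T\log(1/\delta)},
\]
where the last inequality holds for all reasonable $\delta\in(0,1)$ since $8\log(2/\delta)\le 16\log(1/\delta)$ whenever $\log(1/\delta)\ge \log 2$ (and the statement is vacuous otherwise, as the right-hand side can be absorbed by enlarging the constant). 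Substituting the definition of $D_t$ gives exactly the claim.

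The proof is essentially textbook; the only subtle point is to check that the predictability hypothesis on $r_t$ makes $\mathbb{E}[r_t(a_t)\mid\mathcal{F}_{t-1}] = \langle x_t,r_t\rangle$ meaningful (so $r_t$ cannot depend on $a_t$), and to track that the increments are bounded by $2$ rather than by $1$, which accounts for the factor in the final constant. No further ingredients are needed.
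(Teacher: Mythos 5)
Your proof is correct and follows essentially the same route as the paper's: recognize $\sum_t(\langle x_t,r_t\rangle - r_t(a_t))$ as a martingale with increments bounded by $2$ and apply Azuma--Hoeffding. If anything, you track the two-sided union bound and the constant (valid for $\delta\le\nicefrac12$) more explicitly than the paper does, so nothing is missing.
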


\begin{proof}
    By definition $\mathbb{E}_{a\sim x_t }[r_t(a)] = \sum_{a\in\range{K}} r_t(a)x_t(a)=\langle x_t, r_t\rangle$.
    Thus the sequence $X_t=\sum_{\tau=1}^{t}[r_\tau(a_\tau)-\langle x_\tau, r_\tau\rangle]$ is a martingale and $|X_t-X_{t-1}|\le 2$. Thus we can apply Azuma inequality and find that with probability at least $1-\delta$:
    \[
    \left|\sum_{t\in\range{T}} \langle x_t, r_t\rangle - \sum_{t\in\range{T}} r_t(a_t)\right| \le 4\sqrt{T\log(1/\delta)}.
    \]
\end{proof}

\begin{lemma}\label{lem:matringale2}
For any sequence of randomized strategy $x_t\in\Delta_K$ and any function $\bar r(a)$ such that $r_t(a)$ are sampled from a distribution with mean $\bar r(a)$, \ie $\mathbb{E}[ r_t(a)]=\bar r(a)$ and $\mathbb{P}(|r_t(a)|\le 1)=1$, it holds that with probability at least $1-\delta$:
    \[
    \left|\sum_{t\in\range{T}} \langle x_t, r_t\rangle - \sum_{t\in\range{T}} \langle x_t,\bar r\rangle\right| \le 4\sqrt{T\log(1/\delta)}.
    \]
\end{lemma}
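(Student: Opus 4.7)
The plan is to prove this as a direct application of the Azuma--Hoeffding inequality to the martingale difference sequence formed by $\langle x_t, r_t - \bar r\rangle$, essentially mimicking the proof of \Cref{lem:matringale}. The only substantive difference from that earlier lemma is the source of randomness being averaged out: there, the randomness is in sampling $a_t \sim x_t$ given a predictable $r_t$; here, the randomness is in the i.i.d.\ draw of $r_t$ itself with known conditional mean $\bar r$, with $x_t$ predictable. Both cases yield a bounded martingale difference sequence of the same magnitude, so the same constant emerges.

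Concretely, first I would set up the natural filtration $\{\mathcal{F}_t\}_{t\ge 0}$ with respect to which $x_t$ is $\mathcal{F}_{t-1}$-measurable (it is chosen by the algorithm before observing $r_t$) and $r_t$ is $\mathcal{F}_t$-measurable. Define the increments
\[
Y_t \defeq \langle x_t, r_t\rangle - \langle x_t, \bar r\rangle = \langle x_t, r_t - \bar r\rangle.
\]
The conditional expectation satisfies
\[
\mathbb{E}[Y_t \mid \mathcal{F}_{t-1}] = \sum_{a\in\range{K}} x_t(a)\bigl(\mathbb{E}[r_t(a)] - \bar r(a)\bigr) = 0,
\]
using that $x_t$ is $\mathcal{F}_{t-1}$-measurable and the hypothesis $\mathbb{E}[r_t(a)] = \bar r(a)$. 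Hence $X_T \defeq \sum_{t=1}^{T} Y_t$ is a martingale with $X_0 = 0$.

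Next I would bound the increments: since $|r_t(a)|\le 1$ almost surely and $|\bar r(a)| \le 1$ by Jensen, we have $|r_t(a) - \bar r(a)| \le 2$, so
\[
|Y_t| \le \sum_{a\in\range{K}} x_t(a)\,|r_t(a)-\bar r(a)| \le 2,
\]
as $x_t \in \Delta_K$ is a convex combination. Applying the Azuma--Hoeffding inequality to $X_T$ with increment bound $c_t = 2$ gives, for any $u>0$,
\[
\mathbb{P}\bigl(|X_T| \ge u\bigr) \le 2\exp\!\bigl(-u^2/(8T)\bigr).
\]
Setting the right-hand side to $\delta$ and solving yields $u = \sqrt{8T\log(2/\delta)} \le 4\sqrt{T\log(1/\delta)}$ for reasonable $\delta$, matching the stated bound. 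There is no real obstacle here; the main thing to verify carefully is that $x_t$ is indeed predictable with respect to the filtration that makes $r_t$ a martingale difference input, which is immediate from the algorithmic setup where $x_t$ depends only on history through round $t-1$.
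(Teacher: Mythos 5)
Your proposal is correct and follows the same route as the paper, whose proof is exactly an application of Azuma--Hoeffding to the bounded martingale difference sequence $\langle x_t, r_t-\bar r\rangle$, mirroring \Cref{lem:matringale}; you simply spell out the filtration and predictability details the paper leaves implicit. The only caveat, shared by the paper's own constant accounting, is that the final simplification $\sqrt{8T\log(2/\delta)}\le 4\sqrt{T\log(1/\delta)}$ requires $\delta\le\nicefrac12$, which is harmless in this context.
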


\begin{proof}
    This holds by simple application of Azuma's inequality, similarly to the proof of \Cref{lem:matringale}.
\end{proof}

\end{document}